\documentclass[11pt]{article}

\usepackage{fullpage,times,url,bm}

\usepackage{amsthm,amsfonts,amsmath,amssymb,epsfig,color,float,graphicx,verbatim}
\usepackage{algorithm,algorithmic}
\usepackage{bbm}
\usepackage{natbib}

\usepackage{graphicx}
\usepackage{subcaption}

\usepackage{array}

\usepackage{hyperref}
\hypersetup{
	colorlinks   = true, 
	urlcolor     = blue, 
	linkcolor    = blue, 
	citecolor   = black 
}

\sloppy

\newtheorem{theorem}{Theorem}[section]

\newtheorem{lemma}{Lemma}[section]
\newtheorem{corollary}{Corollary}[section]
\newtheorem{example}{Example}

\newcommand{\figref}[1]{Figure~\ref{#1}}
\renewcommand{\eqref}[1]{Eq.~(\ref{#1})}
\newcommand{\lemref}[1]{Lemma~\ref{#1}}

\newcommand{\corollaryref}[1]{Corollary~\ref{#1}}
\newcommand{\thmref}[1]{Theorem~\ref{#1}}

\usepackage{amssymb}

\usepackage{dsfont}
\newcommand{\onefunc}{\mathds{1}}

\newcommand{\stam}[1]{}

\usepackage{mathtools}

\newcommand{\be}{\mathbf{e}}
\newcommand{\bx}{\mathbf{x}}
\newcommand{\bw}{\mathbf{w}}

\newcommand{\bb}{\mathbf{b}}
\newcommand{\bu}{\mathbf{u}}
\newcommand{\bv}{\mathbf{v}}
\newcommand{\bz}{\mathbf{z}}

\newcommand{\bh}{\mathbf{h}}

\newcommand{\balpha}{\boldsymbol{\alpha}}

\newcommand{\btheta}{{\boldsymbol{\theta}}}

\newcommand{\co}{{\cal O}}

\newcommand{\cl}{{\cal L}}

\newcommand{\cu}{{\cal U}}

\newcommand{\cn}{{\cal N}}

\DeclareMathOperator*{\sign}{sign}
\DeclareMathOperator*{\argmax}{argmax}
\DeclareMathOperator*{\argmin}{argmin}

\newcommand{\bbs}{{\mathbb S}}
\newcommand{\reals}{{\mathbb R}}

\newcommand{\zero}{{\mathbf{0}}}

\DeclareMathOperator{\poly}{poly}

\DeclareMathOperator{\betadist}{Beta}

\newcommand{\inner}[1]{\langle #1 \rangle}
\newcommand{\norm}[1]{\left\|#1\right\|}
\newcommand{\snorm}[1]{\|#1\|} 

\makeatletter
\newcommand{\printfnsymbol}[1]{%
  \textsuperscript{\@fnsymbol{#1}}%
}
\makeatother

\title{Gradient Methods Provably Converge to Non-Robust Networks}

\author{
	Gal Vardi\thanks{equal contribution} \thanks{TTI Chicago and the Hebrew University of Jerusalem, \texttt{galvardi@ttic.edu}. Work done while the author was at the Weizmann Institute of Science}
	\and
	Gilad Yehudai\printfnsymbol{1}\thanks{Weizmann Institute of Science, Israel, \texttt{gilad.yehudai@weizmann.ac.il}}
	\and
	Ohad Shamir\thanks{Weizmann Institute of Science, Israel, \texttt{ohad.shamir@weizmann.ac.il}}
}

\date{}

\begin{document}


\maketitle

\begin{abstract}
	Despite a great deal of research, it is still unclear why neural networks are so susceptible to adversarial examples. 
	In this work, we identify natural settings where depth-$2$ ReLU networks trained with gradient flow are provably non-robust (susceptible to small adversarial $\ell_2$-perturbations), even when robust networks that classify the training dataset correctly exist.
	Perhaps surprisingly, we show that the well-known implicit bias towards margin maximization induces bias towards non-robust networks, by proving that every network which satisfies the KKT conditions of the max-margin problem is non-robust.
\end{abstract}

\section{Introduction}

In a seminal paper, \cite{szegedy2013intriguing} observed that deep networks are extremely vulnerable to \emph{adversarial examples}. They demonstrated that in trained neural networks very small perturbations to the input can change the predictions. This phenomenon has attracted considerable interest, and various attacks (e.g., \cite{carlini2017adversarial,papernot2017practical,athalye2018obfuscated,carlini2018audio,wu2020making}) and defenses (e.g., \cite{papernot2016distillation,madry2017towards,wong2018provable,croce2020reliable}) were developed. 
Despite a great deal of research, it is still unclear why neural networks are so susceptible to adversarial examples \citep{goodfellow2014explaining,fawzi2018adversarial,shafahi2018adversarial,schmidt2018adversarially,khoury2018geometry,bubeck2019adversarial,allen2020feature,wang2020high,shah2020pitfalls,shamir2021dimpled,ge2021shift,daniely2020most}. Specifically, it is not well-understood why gradient methods learn \emph{non-robust networks}, namely, networks that are susceptible to adversarial examples, even in cases where robust classifiers exist.

In a recent string of works, it was shown that 
small adversarial perturbations can be found for any fixed input in certain ReLU networks with random weights (drawn from a Gaussian distribution).
Building on \cite{shamir2019simple}, it was shown in 
\cite{daniely2020most} that small adversarial perturbations (measured in the Euclidean norm) can be found in random ReLU networks where each layer has vanishing width relative to the previous layer. \cite{bubeck2021single} extended this result to general two-layers ReLU networks, and \cite{bartlett2021adversarial} extended it to a large family of ReLU networks of constant depth. These works aim to explain the abundance of adversarial examples in ReLU networks, since they imply that adversarial examples are common in random networks, and in particular in random initializations of gradient-based methods. However, trained networks are clearly not random, and properties that hold in random networks may not hold in trained networks. Hence, finding a theoretical explanation to the existence of adversarial examples in trained networks remains a major challenge. 

In this work, we show that in depth-$2$ ReLU networks trained with the logistic loss or the exponential loss, gradient flow is biased towards non-robust networks, even when robust networks that classify the training dataset correctly exist. We focus on the setting where we train the network using a binary classification dataset $\{(\bx_i,y_i)\}_{i=1}^m \subseteq (\sqrt{d} \cdot \mathbb{S}^{d-1}) \times \{-1,1\}$, such that for each $i \neq j$ we have $|\inner{\bx_i,\bx_j}| = o(d)$. E.g., this assumption holds w.h.p. if the inputs are drawn i.i.d. from the uniform distribution on the sphere of radius $\sqrt{d}$. On the one hand, we prove that the training dataset can be correctly classified by a (sufficiently wide) depth-$2$ ReLU network, where for each example $\bx_i$ in the dataset flipping the sign of the output requires a perturbation of size roughly $\sqrt{d}$ (measured in the Euclidean norm). On the other hand, we prove that for depth-$2$ ReLU networks of any width, gradient flow converges to networks, such that for every example $\bx_i$ in the dataset flipping the sign of the output can be done with a perturbation of size much smaller than $\sqrt{d}$. Moreover, the \emph{same} adversarial perturbation applies to all examples in the dataset.

For example, if we have $m=\Theta(d)$ examples and $\max_{i \neq j} |\inner{\bx_i,\bx_j}| = \co(1)$, namely, the examples are ``almost orthogonal", then we show that in the trained network there are adversarial perturbations of size $\co(1)$ for each example in the dataset. Also, if we have $m=\tilde{\Theta}(\sqrt{d})$ examples that are drawn i.i.d. from the uniform distribution on the sphere of radius $\sqrt{d}$, then w.h.p. there are adversarial perturbations of size $\tilde{\co}(d^{1/4}) = o(\sqrt{d})$ for each example in the dataset. In both cases, the dataset can be correctly classified by a depth-$2$ ReLU network such that perturbations of size $o(\sqrt{d})$ cannot flip the sign for any example in the dataset. 

A limitation of our negative result is that it assumes an upper bound of $\co\left(\frac{d}{\max_{i \neq j}|\inner{\bx_i,\bx_j}|} \right)$ on the size of the dataset. Hence, it does not apply directly to large datasets. Therefore, we extend our result to the case where the dataset might be arbitrarily large, but the size of the subset of examples that attain exactly the margin is bounded. Thus, instead of assuming an upper bound on the size of the training dataset, it suffices to assume an upper bound on the size of the subset of examples that attain the margin in the trained network. 

The tendency of gradient flow to converge to non-robust networks even when robust networks exist can be seen as an implication of its \emph{implicit bias}. 
While existing works mainly consider the implicit bias of neural networks in the context of \emph{generalization} (see \cite{vardi2022implicit} for a survey), we show that it is also a useful technical tool in the context of \emph{robustness}.
In order to prove our negative result, we utilize known properties of the implicit bias in depth-$2$ ReLU networks trained with the logistic or the exponential loss. By \cite{lyu2019gradient} and \cite{ji2020directional}, if gradient flow in homogeneous models (which include depth-$2$ ReLU networks) with such losses converges to zero loss, then it converges in direction to a KKT point of the max-margin problem in parameter space. In our proof we show that under our assumptions on the dataset every network that satisfies the KKT conditions of the max-margin problem is non-robust. 
This fact may seem surprising, since our geometric intuition on linear predictors suggests that maximizing the margin is equivalent to maximizing the robustness. However, once we consider more complex models, we show that robustness and margin maximization in parameter space are two properties that do not align, and can even contradict each other.

We complement our theoretical results with an empirical study. As we already mentioned, a limitation of our negative result is that it applies to the case where the size of the dataset is smaller than the input dimension. We show empirically that the \emph{same} small perturbation from our negative result is also able to change the labels of almost all the examples in the dataset, even when it is much larger than the input dimension. 
In addition, our theoretical negative result holds regardless of the width of the network. We demonstrate it empirically, by showing that changing the width does not change the size of the perturbation that flips the labels of the examples in the dataset. 

\section{Preliminaries}

\paragraph{Notations.}

We use bold-face letters to denote vectors, e.g., $\bx=(x_1,\ldots,x_d)$. For $\bx \in \reals^d$ we denote by $\norm{\bx}$ the Euclidean norm.
We denote by $\onefunc(\cdot)$ the indicator function, for example $\onefunc(t \geq 5)$ equals $1$ if $t \geq 5$ and $0$ otherwise. We denote $\sign(z) = \onefunc(z \geq 0)$.
For an integer $d \geq 1$ we denote $[d]=\{1,\ldots,d\}$. For a set $A$ we denote by $\cu(A)$ the uniform distribution over $A$. We use standard asymptotic notation $\co(\cdot)$ 
to hide constant factors, and $\tilde{\co}(\cdot)$ to hide logarithmic factors.

\paragraph{Neural networks.}

The ReLU activation function is defined by $\sigma(z) = \max\{0,z\}$.
In this work we consider depth-$2$ ReLU neural networks. Formally, a depth-$2$ network $\cn_\btheta$ of width $k$ is parameterized by $\btheta = [\bw_1,\ldots,\bw_k, \bb, \bv]$ where $\bw_i \in \reals^d$ for all $i \in [k]$ and $\bb,\bv \in \reals^k$, and for every input $\bx \in \reals^d$ we have 
\[
	\cn_\btheta(\bx) = \sum_{j \in [k]}v_j \sigma(\bw_j^\top \bx + b_j)~. 
\]
We sometimes view $\btheta$ as the vector obtained by concatenating the vectors $\bw_1,\ldots,\bw_k, \bb, \bv$. Thus, $\norm{\btheta}$ denotes the $\ell_2$ norm of the vector $\btheta$.

We denote $\Phi(\btheta; \bx) := \cn_\btheta(\bx)$.
We say that a network is \emph{homogeneous} if there exists $L>0$ such that for every $\alpha>0$ and $\btheta,\bx$ we have $\Phi(\alpha \btheta; \bx) = \alpha^L \Phi(\btheta; \bx)$. Note that depth-$2$ ReLU networks as defined above are homogeneous (with $L=2$).

\paragraph{Robustness.}

Given some function $R(\cdot)$, We say that a neural network $\cn_\btheta$ is \emph{$R(d)$-robust} w.r.t. inputs $\bx_1,\ldots,\bx_n \in \reals^d$ if for every $i \in [n]$, $r = o(R(d))$, and $\bx' \in \reals^d$ with $\norm{\bx_i - \bx'} \leq r$, we have $\sign(\cn_\btheta(\bx')) = \sign(\cn_\btheta(\bx_i))$. Thus, changing the labels of the examples cannot be done with perturbations of size $o(R(d))$.
Note that we consider here $\ell_2$ perturbations.

In this work we focus on the case where the inputs $\bx_1,\ldots,\bx_n$ are on the sphere of radius $\sqrt{d}$, denoted by $\sqrt{d} \cdot \mathbb{S}^{d-1}$, which generally corresponds to components of size $O(1)$.
Then, the distance between every two inputs is at most $O(\sqrt{d})$, and therefore a perturbation of size $O(\sqrt{d})$ clearly suffices for flipping the sign of the output (assuming that there is at least one input with $\cn(\bx_i)>0$ and one input with $\cn(\bx_i)<0$). Hence, the best we can hope for is $\sqrt{d}$-robustness. In our results we show a setting where a $\sqrt{d}$-robust network exists, but gradient flow converges to a network where we can flip the sign of the outputs with perturbations of size much smaller than $\sqrt{d}$ (and hence it is not $\sqrt{d}$-robust). 

Note that in homogeneous neural networks, 
for every $\alpha>0$ and every $\bx$, we have $\sign\left( \Phi(\alpha \btheta; \bx) \right) = \sign\left( \alpha^L \Phi(\btheta; \bx)\right) = \sign\left( \Phi(\btheta; \bx)\right)$. Thus, the robustness of the network depends only on the direction of $\frac{\btheta}{\norm{\btheta}}$, and does not depend on the scaling of $\btheta$.

\paragraph{Gradient flow and implicit bias.}

Let $S = \{(\bx_i,y_i)\}_{i=1}^n \subseteq \reals^d \times \{-1,1\}$ be a binary classification training dataset. Let $\Phi(\btheta; \cdot):\reals^d \to \reals$ be a neural network parameterized by $\btheta$. 
For a loss function $\ell:\reals \to \reals$ the \emph{empirical loss} of $\Phi(\btheta; \cdot)$ on the dataset $S$ is 
\begin{equation}
\label{eq:objective}
	\cl(\btheta) := \sum_{i=1}^n \ell(y_i \Phi(\btheta; \bx_i))~.
\end{equation} 
We focus on the exponential loss $\ell(q) = e^{-q}$ and the logistic loss $\ell(q) = \log(1+e^{-q})$.

\stam{
We consider gradient flow on the objective given in \eqref{eq:objective}. This setting captures the behavior of gradient descent with an infinitesimally small step size. Let $\btheta(t)$ be the trajectory of gradient flow. Starting from an initial point $\btheta(0)$, the dynamics of $\btheta(t)$ is given by the differential equation $\frac{d \btheta(t)}{dt} = -\nabla \cl(\btheta(t))$.
Note that the ReLU function is not differentiable at $0$. Practical implementations of gradient methods define the derivative $\sigma'(0)$ to be some constant in $[0,1]$. We note that the exact value of $\sigma'(0)$ has no effect on our results.

We say that a trajectory $\btheta(t)$ {\em converges in direction} to $\tilde{\btheta}$ if 
$\lim_{t \to \infty}\frac{\btheta(t)}{\norm{\btheta(t)}} = 
\frac{\tilde{\btheta}}{\norm{\tilde{\btheta}}}$.
Throughout this work we use the following theorem:

\begin{theorem}[Paraphrased from \cite{lyu2019gradient,ji2020directional}]
\label{thm:known KKT}
	Let $\Phi(\btheta; \cdot)$ be a homogeneous ReLU neural network parameterized by $\btheta$. Consider minimizing either the exponential or the logistic loss over a binary classification dataset $ \{(\bx_i,y_i)\}_{i=1}^n$ using gradient flow. Assume that there exists time $t_0$ such that $\cl(\btheta(t_0))<1$, namely, $y_i \Phi(\btheta(t_0); \bx_i) > 0$ for every $\bx_i$. Then, gradient flow converges in direction to a first order stationary point (KKT point) of the following maximum margin problem in parameter space:
\begin{equation}
\label{eq:optimization problem}
	\min_\btheta \frac{1}{2} \norm{\btheta}^2 \;\;\;\; \text{s.t. } \;\;\; \forall i \in [n] \;\; y_i \Phi(\btheta; \bx_i) \geq 1~.
\end{equation}
Moreover, $\cl(\btheta(t)) \to 0$ and $\norm{\btheta(t)} \to \infty$ as $t \to \infty$.
\end{theorem}

Note that in ReLU networks Problem~(\ref{eq:optimization problem}) is non-smooth. Hence, the KKT conditions are defined using the Clarke subdifferential, which is a generalization of the derivative for non-differentiable functions. See Appendix~\ref{app:KKT} for a formal definition. 
Theorem~\ref{thm:known KKT} characterized the \emph{implicit bias} of gradient flow with the exponential and the logistic loss for homogeneous ReLU networks. Namely, even though there are many possible directions $\frac{\btheta}{\norm{\btheta}}$ that classify the dataset correctly, gradient flow converges only to directions that are KKT points of Problem~(\ref{eq:optimization problem}).
We note that such KKT point is not necessarily a global/local optimum (cf. \cite{vardi2021margin}). Thus, under the theorem's assumptions, gradient flow \emph{may not} converge to an optimum of Problem~(\ref{eq:optimization problem}), but it is guaranteed to converge to a KKT point.
}

We consider gradient flow on the objective given in \eqref{eq:objective}. This setting captures the behavior of gradient descent with an infinitesimally small step size. Let $\btheta(t)$ be the trajectory of gradient flow. Starting from an initial point $\btheta(0)$, the dynamics of $\btheta(t)$ is given by the differential equation 
$\frac{d \btheta(t)}{dt} \in -\partial^\circ \cl(\btheta(t))$. Here, $\partial^\circ$ denotes the \emph{Clarke subdifferential}, which is a generalization of the derivative for non-differentiable functions (see Appendix~\ref{app:KKT} for a formal definition).

We say that a trajectory $\btheta(t)$ {\em converges in direction} to $\tilde{\btheta}$ if 
$\lim_{t \to \infty}\frac{\btheta(t)}{\norm{\btheta(t)}} = 
\frac{\tilde{\btheta}}{\norm{\tilde{\btheta}}}$.
Throughout this work we use the following theorem:

\begin{theorem}[Paraphrased from \cite{lyu2019gradient,ji2020directional}]
\label{thm:known KKT}
	Let $\Phi(\btheta; \cdot)$ be a homogeneous ReLU neural network parameterized by $\btheta$. Consider minimizing either the exponential or the logistic loss over a binary classification dataset $ \{(\bx_i,y_i)\}_{i=1}^n$ using gradient flow. Assume that there exists time $t_0$ such that $\cl(\btheta(t_0))<1$, namely, $y_i \Phi(\btheta(t_0); \bx_i) > 0$ for every $\bx_i$. Then, gradient flow converges in direction to a first order stationary point (KKT point) of the following maximum margin problem in parameter space:
\begin{equation}
\label{eq:optimization problem}
	\min_\btheta \frac{1}{2} \norm{\btheta}^2 \;\;\;\; \text{s.t. } \;\;\; \forall i \in [n] \;\; y_i \Phi(\btheta; \bx_i) \geq 1~.
\end{equation}
Moreover, $\cl(\btheta(t)) \to 0$ and $\norm{\btheta(t)} \to \infty$ as $t \to \infty$.
\end{theorem}

Note that in ReLU networks Problem~(\ref{eq:optimization problem}) is non-smooth. Hence, the KKT conditions are defined using the Clarke subdifferential.
See Appendix~\ref{app:KKT} for more details of the KKT conditions.
Theorem~\ref{thm:known KKT} characterized the \emph{implicit bias} of gradient flow with the exponential and the logistic loss for homogeneous ReLU networks. Namely, even though there are many possible directions $\frac{\btheta}{\norm{\btheta}}$ that classify the dataset correctly, gradient flow converges only to directions that are KKT points of Problem~(\ref{eq:optimization problem}).
We note that such KKT point is not necessarily a global/local optimum (cf. \cite{vardi2021margin}). Thus, under the theorem's assumptions, gradient flow \emph{may not} converge to an optimum of Problem~(\ref{eq:optimization problem}), but it is guaranteed to converge to a KKT point.

\section{Robust networks exist}

We first show that for datasets where the correlation between every pair of examples is not too large, there exists a robust depth-$2$ ReLU network that labels the examples correctly. Intuitively, such a network exists since we can choose the weight vectors and biases such that each neuron is active for exactly one example $\bx_i$ in the dataset, and hence only one neuron contributes to the gradient at $\bx_i$. Also, the weight vectors are not too large and therefore the gradient at $\bx_i$ is sufficiently small. Formally, we have the following:

\begin{theorem}
\label{thm:robust}
	Let $\{(\bx_i,y_i)\}_{i=1}^m \subseteq (\sqrt{d} \cdot \bbs^{d-1}) \times \{-1,1\}$ be a dataset.
	Let $0<c<1$ be a constant independent of $d$ and suppose that $|\inner{\bx_i,\bx_j}| \leq c \cdot d$ for every $i \neq j$. Let $k \geq m$.
	Then, there exists a depth-$2$ ReLU network $\cn$ of width $k$ such that 
	$y_i \cn(\bx_i) \geq 1$ for every $i \in [m]$, and for every $\bx_i$ flipping the sign of the output requires a perturbation of size larger than $\frac{1-c}{4} \cdot \sqrt{d}$. Thus, $\cn$ is $\sqrt{d}$-robust w.r.t. $\bx_1,\ldots,\bx_m$.
\end{theorem}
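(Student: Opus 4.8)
The plan is to construct the network explicitly, realizing the intuition sketched before the theorem: assign one neuron per training example, tuned so that this neuron is active on its own example and inactive on all others. Concretely, for each $i \in [m]$ set $\bw_i = \frac{\bx_i}{\sqrt{d}}$ (a unit vector), $b_i = -a$ for a threshold $a \in (c,1)$ to be chosen, and $v_i = y_i \cdot \gamma$ for a scaling $\gamma > 0$ chosen at the end to enforce the margin constraint $y_i\cn(\bx_i)\ge 1$; if $k > m$, set the remaining neurons to zero. Then $\bw_i^\top \bx_i + b_i = \sqrt{d} - a > 0$, while for $j \neq i$ we have $\bw_i^\top \bx_j + b_i = \frac{\inner{\bx_i,\bx_j}}{\sqrt{d}} - a \le \frac{cd}{\sqrt{d}} - a = (c-a)\sqrt{d} < 0$ once $a > c$ (for $d$ large this is fine; $a \in (c,1)$ works since $\inner{\bx_i,\bx_j}\le cd$ and we want the activation strictly negative, using $c - a < 0$). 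Hence at each $\bx_i$ exactly the $i$-th neuron fires, so $\cn(\bx_i) = y_i \gamma(\sqrt{d}-a)$, and choosing $\gamma = \frac{1}{\sqrt{d}-a}$ gives $y_i\cn(\bx_i) = 1$ for all $i$.

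Next I would bound the perturbation needed to flip the sign at a fixed $\bx_i$. Take any $\bx'$ with $\norm{\bx' - \bx_i} \le r$. The key point is that $\cn$ is built from $1$-Lipschitz pieces scaled by $\gamma$: each summand $v_j\sigma(\bw_j^\top\bx + b_j)$ is $|v_j|$-Lipschitz in $\bx$ since $\norm{\bw_j}=1$ and $\sigma$ is $1$-Lipschitz. Naively summing gives a Lipschitz constant $m\gamma$, which is too weak. The improvement comes from observing that moving from $\bx_i$ by a small $r$ can only \emph{activate} a bounded number of extra neurons, and each newly-activated neuron $j\neq i$ contributes at most $|v_j|$ times its (small, still-near-threshold) pre-activation value. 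Specifically, if $\norm{\bx'-\bx_i} \le r$ then for $j \neq i$, $\bw_j^\top\bx' + b_j \le (c-a)\sqrt{d} + r$, so neuron $j$ is active only if $r \ge (a-c)\sqrt{d}$; and when active its output is at most $\le (c-a)\sqrt{d}+r \le r - (a-c)\sqrt{d}$. So for $r < (a-c)\sqrt{d}$ no spurious neuron fires, and $\cn(\bx') = v_i\sigma(\bw_i^\top\bx'+b_i)$ depends only on neuron $i$, whose pre-activation at $\bx_i$ is $\sqrt{d}-a$; to bring it to $0$ needs $|\bw_i^\top(\bx'-\bx_i)| \ge \sqrt{d}-a$, i.e. $r \ge \sqrt{d}-a$. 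Thus flipping the sign requires $r \ge \min\{(a-c)\sqrt{d},\, \sqrt{d}-a\}$.

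Finally I would optimize the free threshold $a \in (c,1)$ (or more loosely just pick a convenient value). With $a$ a constant, $\sqrt{d}-a = \Omega(\sqrt{d})$ dominates, and $(a-c)\sqrt{d}$ is the binding term; picking $a$ to balance constants (the statement's $\frac{1-c}{4}\sqrt{d}$ suggests a choice like $a$ slightly below $1$, or a midpoint-type choice giving the $\frac{1-c}{4}$ factor after accounting for the biases lying in $\bb$ contributing to $\norm{\btheta}$ — though norm of $\btheta$ is irrelevant to robustness by homogeneity as noted in the preliminaries). Either way, $r > \frac{1-c}{4}\sqrt{d}$ is needed, hence $r = o(\sqrt{d})$ cannot flip any sign, which is exactly $\sqrt{d}$-robustness w.r.t. $\bx_1,\dots,\bx_m$. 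The main obstacle I anticipate is the Lipschitz/activation-counting step: one must be careful that when $r$ grows past $(a-c)\sqrt{d}$ the argument degrades gracefully rather than via the crude $m\gamma$ bound — but since the conclusion only asserts robustness up to $\frac{1-c}{4}\sqrt{d}$, it suffices to argue in the regime $r \le \frac{1-c}{4}\sqrt{d}$ where (with $a$ chosen appropriately) no extra neuron activates, so the analysis reduces to the single-neuron case and the counting subtlety never bites.
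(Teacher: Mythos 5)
There is a concrete arithmetic slip in the construction that, as written, makes the selective-activation property fail. You set $\bw_i = \bx_i/\sqrt{d}$ (unit vector) and $b_i = -a$ with $a\in(c,1)$ a \emph{constant}. Then for $j\neq i$,
\[
\bw_i^\top \bx_j + b_i \;=\; \frac{\inner{\bx_i,\bx_j}}{\sqrt d} - a \;\le\; c\sqrt d - a\,,
\]
whereas you wrote this as $(c-a)\sqrt d$. The two differ by $a(\sqrt d - 1)$, and the correct expression $c\sqrt d - a$ is \emph{positive} as soon as $d > (a/c)^2$, i.e., for all $d$ large enough when $c>0$ is a constant. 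So with a constant bias $-a$, neuron $i$ does \emph{not} shut off on the other examples, $\cn(\bx_i)\neq y_i\gamma(\sqrt d - a)$, and the entire robustness argument (which rests on only neuron $i$ being active on $\bx_i$ and in a small ball around it) collapses. You compound this by carrying the wrong form forward: the margin calculation $\gamma=1/(\sqrt d - a)$ and the flip threshold $r\ge \sqrt d - a$ are consistent with $b_i=-a$, while the claim $\bw_i^\top\bx_j + b_i \le (c-a)\sqrt d$ is only consistent with $b_i=-a\sqrt d$.

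The fix is simple and brings you essentially to the paper's construction: the bias must scale as $\sqrt d$ if the first-layer weights are unit norm. Taking $b_i = -a\sqrt d$ with $a\in(c,1)$ gives $\bw_i^\top\bx_i + b_i = (1-a)\sqrt d$ and $\bw_i^\top\bx_j + b_i \le (c-a)\sqrt d < 0$ for $j\neq i$, after which your Lipschitz/activation-counting argument goes through and yields $r\ge \min\{(a-c),\,(1-a)\}\sqrt d$; choosing $a=(1+c)/2$ gives $\frac{1-c}{2}\sqrt d$, which in fact beats the $\frac{1-c}{4}\sqrt d$ in the theorem (the paper gives away a factor of two to keep the active neuron's pre-activation $\ge 1/2$ rather than merely positive). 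Equivalently, and this is what the paper actually does, you can leave the bias $O(1)$ and shrink the weight vectors to $\bw_j = \frac{2\bx_j}{d(1-c)}$ (norm $O(1/\sqrt d)$); by homogeneity this gives the same classifier. Once the scaling is repaired, your approach (one neuron per example, a $\Theta(\sqrt d)$ dead-zone around each example's active region, reduce to a single-neuron analysis) is the same route as the paper's proof.
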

\begin{proof}
    We prove the claim for $k=m$. The proof for $k > m$ follows immediately by adding zero-weight neurons.
	Consider the network $\cn(\bx) = \sum_{j=1}^m v_j \sigma(\bw_j^\top \bx + b_j)$ such that for every $j \in [m]$ we have $v_j = y_j$, $\bw_j = \frac{2 \bx_j}{d(1-c)}$ and $b_j = -\frac{1+c}{1-c}$. For every $i \in [m]$ we have 
	\[
		\bw_i^\top \bx_i + b_i =  \frac{2 \norm{\bx_i}^2}{d(1-c)}  - \frac{1+c}{1-c} = 1~,
	\] 
	and for every $i \neq j$ we have 
	\[
		\bw_j^\top \bx_i + b_j 
		= \frac{2\bx_j^\top \bx_i}{d(1-c)}  - \frac{1+c}{1-c} 
		\leq \frac{2c}{1-c}  - \frac{1+c}{1-c}  
		= -1~. 
	\]
	Hence, $\cn(\bx_i) = v_i = y_i$ for every $i \in [m]$. 
	
	We now prove that $\cn$ is $\sqrt{d}$-robust. Let $i \in [m]$ and let $\bx'_i \in \reals^d$ such that $\norm{\bx_i-\bx'_i} \leq \frac{1-c}{4} \cdot \sqrt{d}$. We show that $\sign(\cn(\bx'_i)) = \sign(\cn(\bx_i))$. We have
	\[
		\bw_i^\top \bx'_i + b_i
		= \bw_i^\top (\bx'_i - \bx_i) + \bw_i^\top \bx_i  + b_i
		\geq - \norm{ \bw_i} \cdot \norm{\bx'_i - \bx_i} + 1
		\geq \frac{-2}{\sqrt{d}(1-c)} \cdot \frac{\sqrt{d}(1-c)}{4} + 1
		= \frac{1}{2}~.
	\]
	Also, for every $i \neq j$ we have 
	\[
		\bw_j^\top \bx'_i + b_j 
		=  \bw_j^\top (\bx'_i - \bx_i) + \bw_j^\top \bx_i  + b_j
		\leq \norm{ \bw_j} \cdot \norm{\bx'_i - \bx_i} - 1
		\leq  \frac{2}{\sqrt{d}(1-c)} \cdot  \frac{\sqrt{d}(1-c)}{4} - 1
		= -\frac{1}{2}~.		
	\]
	Therefore, $\sign(\cn(\bx'_i)) = \sign(v_i) = \sign(y_i) = \sign(\cn(\bx_i))$.
\end{proof}

It is not hard to show that the condition on the inner products in the above theorem holds w.h.p. when $m=\poly(d)$ and $\bx_1,\ldots,\bx_m$ are drawn from the uniform distribution on the sphere of radius $\sqrt{d}$. Indeed, the following lemma implies that in this case the inner products can be bounded by $\frac{d}{2}$, and can even be bounded by $\sqrt{d} \log(d)$ (See Appendix~\ref{app:proof of random are orthogonal} for the proof).

\begin{lemma}
\label{lem:random are orthogonal}
	Let
	$\bx_1,\ldots,\bx_m$ be i.i.d. such that $\bx_i \sim \cu(\bbs^{d-1})$ for all $i \in [m]$, 
	where $m \leq d^k$ for some constant $k$.
	Then, 
	with probability at least $1 - d^{2k+1}\left(\frac{3}{4} \right)^{(d-3)/2} = 1-o_d(1)$ we have $|\inner{\bx_i,\bx_j}| \leq \frac{1}{2}$ for all $i \neq j$.
	Moreover, with probability at least $1 - d^{2k+1-\ln(d)/4} = 1-o_d(1)$ we have $|\inner{\bx_i,\bx_j}| \leq \frac{\log(d)}{\sqrt{d}}$ for all $i \neq j$.
\end{lemma}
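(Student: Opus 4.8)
The plan is to bound $|\inner{\bx_i,\bx_j}|$ for a single fixed pair $i \neq j$ and then apply a union bound over the at most $\binom{m}{2} \leq d^{2k}$ pairs. By symmetry of the sphere, conditioning on $\bx_j$ and rotating so that $\bx_j = \be_1$, the quantity $\inner{\bx_i,\bx_j}$ is distributed as the first coordinate of a uniform random point on $\bbs^{d-1}$. So it suffices to obtain a tail bound on $|X_1|$ where $\bx = (X_1,\ldots,X_d) \sim \cu(\bbs^{d-1})$. First I would recall the standard fact that the density of $X_1$ is proportional to $(1-t^2)^{(d-3)/2}$ on $[-1,1]$; from this one gets, for any threshold $\tau \in (0,1)$,
\[
	\Pr[|X_1| \geq \tau] \;=\; \frac{\int_\tau^1 (1-t^2)^{(d-3)/2}\,dt}{\int_0^1 (1-t^2)^{(d-3)/2}\,dt} \;\leq\; \frac{(1-\tau^2)^{(d-3)/2}}{\text{(denominator)}}\,,
\]
and the denominator is a constant of order $\Theta(1/\sqrt{d})$, so up to polynomial-in-$d$ factors the bound is essentially $(1-\tau^2)^{(d-3)/2}$. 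Alternatively, and perhaps more cleanly, one can use the representation $\bx \stackrel{d}{=} \bg/\norm{\bg}$ with $\bg \sim \cn(\zero, I_d)$ and apply standard Gaussian and chi-square concentration; either route gives a bound of the form $\Pr[|X_1| \geq \tau] \leq (\text{poly}(d)) \cdot (1-\tau^2)^{(d-3)/2}$, or one can absorb constants to get the slightly lossy but convenient $\Pr[|X_1|\geq \tau] \leq (1-\tau^2)^{(d-3)/2}$ directly from the density bound after checking the denominator is at least, say, $1/\sqrt{d}$ times a constant — I'd just state the clean inequality and push the polynomial factor into the union bound, which is exactly what the $d^{2k+1}$ prefactor in the statement reflects.

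For the first claim, take $\tau = \tfrac12$, so $(1-\tau^2)^{(d-3)/2} = (3/4)^{(d-3)/2}$; the union bound over $\leq d^{2k}$ pairs (times an extra $d$ to cover the polynomial slack in the single-pair bound) gives failure probability at most $d^{2k+1}(3/4)^{(d-3)/2}$, which is $o_d(1)$ since the geometric term decays faster than any polynomial grows. For the second claim, take $\tau = \log(d)/\sqrt{d}$, so $\tau^2 = \log^2(d)/d$ and $(1-\tau^2)^{(d-3)/2} \leq \exp\left(-\tfrac{d-3}{2}\cdot \tfrac{\log^2(d)}{d}\right) \leq \exp\left(-\tfrac{\log^2(d)}{4}\right) = d^{-\log(d)/4}$ for $d$ large enough (using $1-x \leq e^{-x}$ and $(d-3)/(2d) \geq 1/4$ for $d \geq 6$); the union bound then gives failure probability at most $d^{2k+1-\log(d)/4} = o_d(1)$.

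The only mild obstacle is being careful with the normalizing constant $\int_0^1 (1-t^2)^{(d-3)/2}\,dt$: it behaves like $\sqrt{\pi/(2d)}$ up to constants, so dividing by it multiplies the tail by $\Theta(\sqrt{d})$, which is harmless but must be accounted for — hence the choice to write the single-pair bound as $\Pr[|X_1|\geq\tau] \leq (1-\tau^2)^{(d-3)/2}$ only after verifying the denominator is bounded below (e.g.\ $\int_0^1 (1-t^2)^{(d-3)/2}\,dt \geq \int_0^{1/\sqrt{d}} (1-t^2)^{(d-3)/2}\,dt \geq \tfrac{1}{\sqrt d}(1-1/d)^{(d-3)/2} \geq \tfrac{c}{\sqrt d}$), or else by keeping an explicit $\sqrt{d}$ factor and noting it is dominated by the extra power of $d$ already present in the stated prefactor. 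Everything else is a routine computation, and the Gaussian-quotient representation can be substituted for the exact density if one prefers to avoid Beta-integral estimates entirely.
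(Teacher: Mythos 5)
Your argument follows the same route as the paper: reduce to the marginal of a single coordinate of a uniform point on $\bbs^{d-1}$, use that its density is proportional to $(1-t^2)^{(d-3)/2}$, bound the normalizing constant (the paper computes the Beta function explicitly and shows $B(\tfrac12,\tfrac{d-1}{2}) \geq 2/d$; you lower-bound the integral $\int_0^1(1-t^2)^{(d-3)/2}\,dt$ directly), apply $1-x \leq e^{-x}$ for the second threshold, and finish with a union bound over the $\leq d^{2k}$ pairs. One small imprecision: a denominator lower bound of $c/\sqrt d$ alone gives $\Pr[|X_1|\geq\tau] \leq \tfrac{\sqrt d}{c}(1-\tau^2)^{(d-3)/2}$, not $\leq (1-\tau^2)^{(d-3)/2}$ as you float in passing, but you explicitly fall back on tracking the polynomial factor and absorbing it into the $d^{2k+1}$ prefactor, so the proof is unaffected (and in fact the constant-free inequality does hold, via the pointwise bound $1-(\tau+s)^2 \leq (1-\tau^2)(1-s^2)$ for $\tau,s\geq 0$, though you never need it).
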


\section{Gradient flow converges to non-robust networks}

We now show that even though robust networks exist, gradient flow is biased towards non-robust networks. For homogeneous networks,  Theorem~\ref{thm:known KKT} implies that gradient flow generally converges in direction to a KKT point of Problem~(\ref{eq:optimization problem}). Moreover, as discussed previously,  the robustness of the network depends only on the direction of the parameters vector. Thus, it suffices to show that every network that satisfies the KKT conditions of Problem~(\ref{eq:optimization problem}) is non-robust. We prove it in the following theorem:

\stam{
For a set $S = \{(\bx_i,y_i)\}_{i=1}^m \subseteq \reals^d \times \{-1,1\}$ of $m$ labeled examples we denote $I := [m]$, $I^+ := \{i \in I: y_i=1\}$ and $I^- := \{i \in I: y_i=-1\}$. For $0 < c <1$ We say that $S$ is \emph{$c$-balanced} if $\min\left\{\frac{| I^+ | }{m},  \frac{| I^- | }{m}\right\} \geq c$.
}

\begin{theorem}
\label{thm:non-robust}
	Let $\{(\bx_i,y_i)\}_{i=1}^m \subseteq  (\sqrt{d} \cdot \bbs^{d-1})  \times \{-1,1\}$ be a training dataset. 
	We denote $I := [m]$, $I^+ := \{i \in I: y_i=1\}$ and $I^- := \{i \in I: y_i=-1\}$, and assume that $\min\left\{\frac{| I^+ | }{m},  \frac{| I^- | }{m}\right\} \geq c$ for some $c>0$. 
	Furthermore, we assume that $m \leq \frac{d+1}{3(\max_{i \neq j}|\inner{\bx_i,\bx_j}| +1)}$.
	Let $\cn_\btheta$ be a depth-$2$ ReLU network such that $\btheta$ is a KKT point of Problem~(\ref{eq:optimization problem}).
	Then, there is a 
	vector $\bz = \eta \cdot \sum_{i \in I} y_i \bx_i$ with $\eta>0$ and  $\norm{\bz} = \co \left( \sqrt{\frac{d}{c^2 m}} \right)$, 
	such that for every $i \in I^+$ we have $\cn_\btheta(\bx_i - \bz) \leq -1$, and  for every $i \in I^-$ we have $\cn_\btheta(\bx_i + \bz) \geq 1$. 
\end{theorem}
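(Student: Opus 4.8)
The plan is to work directly from the KKT conditions of Problem~(\ref{eq:optimization problem}). Writing $\cn_\btheta(\bx)=\sum_j v_j\sigma(\bw_j^\top\bx+b_j)$, stationarity yields coefficients $\lambda_1,\dots,\lambda_m\ge 0$ and subgradients $\sigma'_{j,i}\in\partial\sigma(\bw_j^\top\bx_i+b_j)\subseteq[0,1]$ with
$\bw_j=v_j\sum_i\lambda_iy_i\sigma'_{j,i}\bx_i$ and $b_j=v_j\sum_i\lambda_iy_i\sigma'_{j,i}$, while complementary slackness gives $\lambda_i>0\Rightarrow y_i\cn_\btheta(\bx_i)=1$. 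The single identity this buys, and which drives everything, is that for every $\bx$,
\[
\bw_j^\top\bx+b_j=v_j\sum_i\lambda_iy_i\sigma'_{j,i}\bigl(\inner{\bx_i,\bx}+1\bigr)~.
\]
Write $M:=\max_{i\ne j}|\inner{\bx_i,\bx_j}|$, so the hypothesis reads $m(M+1)\le\tfrac{d+1}{3}$, and set $\mu_j:=\sum_i\lambda_i\sigma'_{j,i}\ge 0$ and $P:=\sum_j v_j^2\mu_j$. Since $\norm{\bx_i}^2=d$, when the identity is evaluated at a training point $\bx_\ell$ the ``diagonal'' term $i=\ell$, of size $v_j\lambda_\ell y_\ell\sigma'_{j,\ell}(d+1)$, dominates the off-diagonal ones, which are each damped by $|\inner{\bx_i,\bx_\ell}+1|\le M+1\ll d+1$.

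First I would show that under the hypotheses \emph{every} example lies on the margin, i.e.\ $\cn_\btheta(\bx_i)=y_i$ for all $i$. Using $\sigma(t)=t\sigma'$ for any $\sigma'\in\partial\sigma(t)$, the identity gives $\cn_\btheta(\bx_\ell)=\sum_j v_j^2\sigma'_{j,\ell}\sum_i\lambda_iy_i\sigma'_{j,i}(\inner{\bx_i,\bx_\ell}+1)$; isolating the diagonal term and summing the relations $\cn_\btheta(\bx_\ell)=y_\ell$ over $\ell$ in the support yields $P=\Theta(|\mathrm{supp}|/d)$, in particular $P\le\tfrac{3m}{2(d+1)}$. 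But if $\ell\notin\mathrm{supp}$ then $\lambda_\ell=0$ kills the diagonal term, so $|\cn_\btheta(\bx_\ell)|\le(M+1)P\le\tfrac12$, contradicting primal feasibility $|\cn_\btheta(\bx_\ell)|\ge1$. Hence $\mathrm{supp}=[m]$ and $\cn_\btheta(\bx_i)=y_i$, which is the fact that makes a \emph{small} perturbation suffice.

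Next I would define $\bz:=\eta\bz_0$ with $\bz_0:=\sum_i y_i\bx_i$ and $\eta:=\tfrac{6(M+1)}{d}$ (any $\eta$ of this order works). From the identity, $\bw_j^\top\bz_0=v_j(d\mu_j+E_j')$ with $|E_j'|\le(m-1)M\mu_j<\tfrac{d}{3}\mu_j$, so $\bw_j^\top\bz_0$ has the sign of $v_j$ and $|\bw_j^\top\bz_0|\ge\tfrac{2d}{3}|v_j|\mu_j$; also $\norm{\bz_0}^2\in[\tfrac{2md}{3},\tfrac{4md}{3}]$. Fix $i\in I^+$. Perturbing by $-\bz$ strictly decreases the pre-activation of every $v_j>0$ neuron and strictly increases that of every $v_j<0$ neuron, so each $v_j>0$ neuron can only shrink its (non-negative) contribution. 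For a $v_j<0$ neuron with $\mu_j>0$: if it is active at $\bx_i$ its contribution drops by exactly $\eta|\bw_j^\top\bz_0|\ge 4(M+1)v_j^2\mu_j$; if it is inactive at $\bx_i$, then (again by the identity, now with no diagonal term since $\sigma'_{j,i}=0$) its bias satisfies $|\bw_j^\top\bx_i+b_j|\le(M+1)|v_j|\mu_j<\eta|\bw_j^\top\bz_0|$, so the perturbation \emph{turns it on} and it contributes a new negative amount $\ge 3(M+1)v_j^2\mu_j$. Summing over all neurons, $\cn_\btheta(\bx_i-\bz)\le\cn_\btheta(\bx_i)-3(M+1)\sum_{j:v_j<0}v_j^2\mu_j$.

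Finally I would lower-bound $\sum_{j:v_j<0}v_j^2\mu_j$: since $c>0$ there is $i_0\in I^-$, and from $\cn_\btheta(\bx_{i_0})=-1$ together with non-negativity of the $v_j>0$ part, $\sum_{j:v_j<0}|v_j|\sigma(\bw_j^\top\bx_{i_0}+b_j)\ge1$; each summand is at most $(M+1)v_j^2\mu_j$ by the same computation as above, so $\sum_{j:v_j<0}v_j^2\mu_j\ge\tfrac1{M+1}$. Plugging in and using $\cn_\btheta(\bx_i)=1$ gives $\cn_\btheta(\bx_i-\bz)\le -2\le -1$; the case $i\in I^-$ (perturbing by $+\bz$) is entirely symmetric, using some $i\in I^+$ for the matching bound. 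The norm bound follows from $\norm{\bz}=\eta\norm{\bz_0}\le\tfrac{6(M+1)}{d}\sqrt{\tfrac{4md}{3}}=\co\bigl((M+1)\sqrt{m/d}\bigr)=\co\bigl(\sqrt{d/(c^2m)}\bigr)$, invoking $m(M+1)\le\tfrac{d+1}{3}$. The two delicate steps are the ``all examples on the margin'' reduction and, in the neuron accounting, ruling out that a negative-weight neuron which starts inactive at $\bx_i$ survives the perturbation inactive; the latter forces $\eta=\Theta(M/d)$, and it is precisely the hypothesis $m\le\tfrac{d+1}{3(M+1)}$ that makes this $\eta$ compatible with a perturbation of the stated size (and also with the near-orthogonality estimates on $E_j'$, $\norm{\bz_0}$, etc.).
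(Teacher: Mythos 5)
Your proposal closely tracks the paper's own strategy: show every training point attains margin exactly $1$, take the universal perturbation $\bz\propto\sum_iy_i\bx_i$, split the analysis between positive- and negative-output neurons, argue that for $i\in I^+$ the positive neurons' contribution cannot grow while the negative neurons' contribution drops, and lower-bound the ``activity mass'' $\sum_{j:v_j<0}v_j^2\mu_j$. Your single-$\eta$ treatment (with inactive negative neurons turning on) is the same idea as the paper's two-stage $\eta=\eta_1+\eta_2$. However, the last step has a genuine gap. You claim that for each active neuron with $v_j<0$, $|v_j|\sigma(\bw_j^\top\bx_{i_0}+b_j)\le(M+1)v_j^2\mu_j$ ``by the same computation as above.'' That earlier computation applied to a neuron \emph{inactive} at $\bx_i$, where $\sigma'_{j,i}=0$ kills the diagonal. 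Here the neuron is active, so $\sigma'_{j,i_0}=1$, and for $v_j<0$, $i_0\in I^-$ the diagonal contribution to $|v_j|(\bw_j^\top\bx_{i_0}+b_j)$ is $+(d+1)v_j^2\lambda_{i_0}\sigma'_{j,i_0}$, which is positive and of order $d$, not $M$. The honest bound is $|v_j|\sigma(\bw_j^\top\bx_{i_0}+b_j)\le(d+1)v_j^2\lambda_{i_0}\sigma'_{j,i_0}+(M+1)v_j^2\mu_j\le(d+M+2)v_j^2\mu_j$, giving only $\sum_{j:v_j<0}v_j^2\mu_j\ge\frac{1}{d+M+2}$ --- a factor of roughly $m$ weaker than your claimed $\frac{1}{M+1}$. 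With $\eta=\Theta(M/d)$ this decreases $\cn_\btheta(\bx_i)$ by only $\Theta\!\left(\frac{M}{d}\right)\ll 1$, so the sign does not flip. The paper's Lemma~\ref{lem:large lambda sum} recovers the missing factor $m$ by using \emph{all} of $I^-$ rather than a single $i_0$, via a two-case split that trades the diagonal term against the off-diagonal one, arriving at $\sum_{j\in J^-}v_j^2\mu_j\gtrsim mc/(d+1)$; only then does the (correspondingly larger) $\eta_2=\Theta(1/m)$ give a decrease of $2$ and the claimed $\norm{\bz}=\co\!\left(\sqrt{d/(c^2m)}\right)$.

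Secondarily, your bound $P\le\frac{3m}{2(d+1)}$ in the margin-one step has a subtle issue. Writing $y_\ell\cn_\btheta(\bx_\ell)=\sum_jv_j^2\sigma'_{j,\ell}\sum_i\lambda_iy_iy_\ell\sigma'_{j,i}(\inner{\bx_i,\bx_\ell}+1)$, the diagonal is $(d+1)\lambda_\ell\sum_jv_j^2(\sigma'_{j,\ell})^2$, with $(\sigma'_{j,\ell})^2$ rather than $\sigma'_{j,\ell}$. When $\bw_j^\top\bx_\ell+b_j=0$ the subgradient $\sigma'_{j,\ell}$ may lie strictly in $(0,1)$, in which case $(\sigma'_{j,\ell})^2$ can be much smaller than $\sigma'_{j,\ell}$, so summing the margin equations over the support upper-bounds only $\sum_jv_j^2\sum_\ell\lambda_\ell(\sigma'_{j,\ell})^2$, not $P$. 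Your argument does go through in the generic case where every $\sigma'_{j,\ell}\in\{0,1\}$, but a KKT point need not satisfy that. The paper's Lemma~\ref{lem:I is all} sidesteps this by arguing through the maximum $\max_{l\in I}\sum_{j\in J^+}v_j^2\lambda_l\sigma'_{l,j}$ and a contradiction, rather than by summing the margin equations.
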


\begin{example} 
\label{ex:adversarial}
Assume that $c$ (from the above theorem) is a constant independent of $d,m$. Consider the following cases:
\begin{itemize}
	\item If 
	$\max_{i \neq j}|\inner{\bx_i,\bx_j}| = \co(1)$ 
	and $m = \Theta(d)$ then the adversarial perturbation $\bz$ satisfies $\norm{\bz} = \co(1)$. Thus, in this case the data points are ``almost orthogonal", and gradient flow converges to highly non-robust solutions, since even very small perturbations can flip the signs of the outputs for all examples in the dataset.
	\item If the inputs are drawn i.i.d. from  $\cu(\sqrt{d} \cdot \bbs^{d-1})$ then by Lemma~\ref{lem:random are orthogonal} we have w.h.p. that 
	$\max_{i \neq j}|\inner{\bx_i,\bx_j}| = \co\left(\sqrt{d}\log(d)\right)$, 
	and hence for $m = \Theta\left(\frac{{\sqrt{d}}}{\log(d)} \right)$ the adversarial perturbation $\bz$ satisfies $\norm{\bz} = \co\left( \sqrt{ \sqrt{d}\log(d)} \right) = \tilde{\co}\left( d^{1/4}\right)=o(\sqrt{d})$.
\end{itemize}
Note that in the above cases the size of the adversarial perturbation is much smaller than $\sqrt{d}$. Also, note that by Theorem~\ref{thm:robust} and Lemma~\ref{lem:random are orthogonal}, there exist $\sqrt{d}$-robust networks that classify the dataset correctly.
\end{example}


Thus, under the assumptions of Theorem~\ref{thm:non-robust}, gradient flow converges to non-robust networks, even when robust networks exist by Theorem~\ref{thm:robust}. We discuss the proof ideas in Section~\ref{sec:idea}. We note that
Theorem~\ref{thm:non-robust} assumes that the dataset can be correctly classified by a network $\cn_\btheta$, which is indeed true (in fact, even by a width-$2$ network, since by assumption we have $m \leq d$).
Moreover, we note that the assumption of the inputs coming from $\sqrt{d} \cdot \bbs^{d-1}$ is mostly for technical convenience, and we believe that it can be relaxed to have all points approximately of the same norm (which would happen, e.g., if the inputs are sampled from a standard Gaussian distribution).

The
result in Theorem~\ref{thm:non-robust} has several interesting properties: 
\begin{itemize}
\item It does not require any assumptions on the width of the neural network. 
\item It does not depend on the initialization, and holds whenever gradient flow converges to zero loss. Note that 
if gradient flow converges to zero loss then by Theorem~\ref{thm:known KKT} it converges in direction to a KKT point of Problem~(\ref{eq:optimization problem}) (regardless of the initialization of gradient flow) and hence the result holds. 
\item It proves the existence of adversarial perturbations for \emph{every} example in the dataset.
\item The \emph{same} vector $\bz$ is used as an adversarial perturbation (up to sign) for all examples. It corresponds to the well-known empirical phenomenon of \emph{universal adversarial perturbations}, where one can find a single perturbation that simultaneously flips the label of many inputs (cf. \cite{moosavi2017universal,zhang2021survey}).
\item The perturbation $\bz$ depends only on the training dataset. Thus, for a given dataset, the same perturbation applies to all depth-$2$ networks which gradient flow might converge to.
It corresponds to the well-known empirical phenomenon of \emph{transferability} in adversarial examples, where one can find perturbations that simultaneously flip the labels of many different trained networks (cf. \cite{liu2016delving,akhtar2018threat}).
\end{itemize}

A limitation of Theorem~\ref{thm:non-robust} is that it holds only for datasets of size 
$m=\co\left(\frac{d}{\max_{i \neq j}|\inner{\bx_i,\bx_j}|}\right)$. 
E.g., as we discussed in Example~\ref{ex:adversarial}, if the data points are orthogonal then we need $m=\co(d)$, and if they are random then we need 
$m=\tilde{\co}(\sqrt{d})$. Moreover, the conditions of the theorem do not allow datasets that contain clusters, where the inner products between data points are large, and do not allow data points which are not of norm $\sqrt{d}$.
In the following corollary we extend Theorem~\ref{thm:non-robust} to allow for such scenarios. Here, the technical assumptions are only on the subset of data points that attain the margin. In particular, if this subset satisfies the assumptions, then the dataset may be arbitrarily large and may contain clusters and points with different norms.

\begin{corollary}
\label{cor:non-robust2}
	Let $\{(\bx_i,y_i)\}_{i=1}^n \subseteq \reals^d \times \{-1,1\}$ be a training dataset. 
	Let $\cn_\btheta$ be a depth-$2$ ReLU network such that $\btheta$ is a KKT point of Problem~(\ref{eq:optimization problem}).
	Let $I := \{i \in [n]: y_i \cn_\btheta(\bx_i) = 1\}$, $I^+ := \{i \in I: y_i=1\}$ and $I^- := \{i \in I: y_i=-1\}$.
	Assume that for all $i \in I$ we have $\norm{\bx_i}=\sqrt{d}$.
	Let $m := | I |$, and assume that $\min\left\{\frac{| I^+ | }{m},  \frac{| I^- | }{m}\right\} \geq c$ for some $c>0$, and that 
	$m \leq \frac{d+1}{3(\max_{i \neq j \in I}|\inner{\bx_i,\bx_j}|+1)}$. 
	Then, there is a 
	vector $\bz = \eta \cdot \sum_{i \in I} y_i \bx_i$ with $\eta>0$ and  $\norm{\bz} = \co \left( \sqrt{\frac{d}{c^2 m}} \right)$, 
	such that for every $i \in I^+$ we have $\cn_\btheta(\bx_i - \bz) \leq -1$, and  for every $i \in I^-$ we have $\cn_\btheta(\bx_i + \bz) \geq 1$.
\end{corollary}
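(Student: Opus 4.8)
The plan is to deduce Corollary~\ref{cor:non-robust2} from Theorem~\ref{thm:non-robust} by exploiting the fact that the KKT conditions of Problem~(\ref{eq:optimization problem}) only ``see'' the examples that attain the margin. Recall (see Appendix~\ref{app:KKT}) that since $\btheta$ is a KKT point there exist dual variables $\lambda_1,\dots,\lambda_n\geq 0$ and subgradients $\bh_i\in\partial^\circ\Phi(\btheta;\bx_i)$ such that $\btheta=\sum_{i\in[n]}\lambda_i y_i\bh_i$, and complementary slackness holds: $\lambda_i\bigl(y_i\cn_\btheta(\bx_i)-1\bigr)=0$ for every $i$. The first step is to observe that complementary slackness forces $\lambda_i=0$ for every $i\notin I$, since such $i$ satisfy $y_i\cn_\btheta(\bx_i)>1$. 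Consequently the stationarity identity may be rewritten with the sum ranging only over $I$, i.e.\ $\btheta=\sum_{i\in I}\lambda_i y_i\bh_i$.

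The second step is to verify that $\btheta$ is itself a KKT point of the max-margin problem~(\ref{eq:optimization problem}) for the \emph{restricted} dataset $\{(\bx_i,y_i)\}_{i\in I}$ (relabeled as $\{(\bx_i,y_i)\}_{i=1}^{m}$ with $m=|I|$). Primal feasibility holds because $y_i\cn_\btheta(\bx_i)=1\geq 1$ for $i\in I$; the multipliers $(\lambda_i)_{i\in I}$ are nonnegative; complementary slackness is trivially satisfied since every constraint of the restricted problem is active at $\btheta$; and the stationarity condition of the restricted problem is precisely the rewritten identity from the first step, with the \emph{same} subgradients $\bh_i$, which remain legitimate elements of $\partial^\circ\Phi(\btheta;\bx_i)$ since they depend only on $\btheta$ and $\bx_i$, both unaffected by the restriction.

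The third step is a direct application of Theorem~\ref{thm:non-robust} to this restricted dataset. All of its hypotheses are among the assumptions of the corollary: the inputs lie on $\sqrt d\cdot\bbs^{d-1}$ because $\norm{\bx_i}=\sqrt d$ for $i\in I$; the balance condition $\min\{|I^+|/m,|I^-|/m\}\geq c$ is assumed; and the size bound $m\leq\frac{d+1}{3\,(\max_{i\neq j\in I}|\inner{\bx_i,\bx_j}|+1)}$ is assumed. Theorem~\ref{thm:non-robust} therefore produces $\eta>0$ and $\bz=\eta\sum_{i\in I}y_i\bx_i$ with $\norm{\bz}=\co\bigl(\sqrt{d/(c^2m)}\bigr)$ such that $\cn_\btheta(\bx_i-\bz)\leq-1$ for $i\in I^+$ and $\cn_\btheta(\bx_i+\bz)\geq 1$ for $i\in I^-$, which is exactly the claimed conclusion.

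The only delicate point, and the one to get right, is the bookkeeping around the Clarke subdifferential in the first two steps: one must check that a KKT certificate for the full problem restricts to a valid KKT certificate for the reduced problem. This is immediate here because dropping constraints carrying zero multipliers alters neither the stationarity equation nor the feasibility and complementary-slackness requirements; there is no genuine obstacle, and beyond this observation the corollary requires no new ideas over Theorem~\ref{thm:non-robust}.
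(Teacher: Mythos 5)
Your proposal is correct and follows essentially the same approach as the paper: both hinge on complementary slackness forcing $\lambda_i = 0$ for every $i \notin I$, so that the KKT stationarity identity involves only the margin-attaining examples. You package this slightly more cleanly as a black-box reduction (showing $\btheta$ is a KKT point of the max-margin problem restricted to $I$ and then invoking Theorem~\ref{thm:non-robust}), while the paper instead re-opens the proof of Theorem~\ref{thm:non-robust}, observes that every lemma applies verbatim with $I$ in place of $[m]$, and notes that Lemma~\ref{lem:I is all} is no longer needed since all $i\in I$ attain margin $1$ by definition.
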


The proof of the corollary can be easily obtained by slightly modifying the proof of Theorem~\ref{thm:non-robust} (see Appendix~\ref{app:proof of non-robust2} for details).
Note that in Theorem~\ref{thm:non-robust} the set $I$ of size $m$ contains all the examples in the dataset, and hence for all points in the dataset there are adversarial perturbations of size $\co \left( \sqrt{\frac{d}{c^2 m}} \right)$, while in Corollary~\ref{cor:non-robust2} the set $I$ contains only the examples that attain exactly margin $1$, the adversarial perturbations provably exist for examples in $I$, and their size $\co \left( \sqrt{\frac{d}{c^2 m}} \right)$ depends on the size $m$ of $I$.

\section{Proof sketch of Theorem~\ref{thm:non-robust}} \label{sec:idea}

In this section we discuss the main ideas in the proof of Theorem~\ref{thm:non-robust}. For the formal proof see Appendix~\ref{app:proof of non-robust}.

\subsection{A simple example}

We start with a simple example to gain some intuition. Consider a dataset $\{(\bx_i,y_i)\}_{i=1}^d$ such that for all $i \in [d]$ we have $\bx_i = \sqrt{d} \cdot \be_i$, where $\be_1,\ldots,\be_d$ are the standard unit vectors in $\reals^d$ and $d$ is even. 
Suppose that $y_i=1$ for $i \leq \frac{d}{2}$ and $y_i=-1$ for $i > \frac{d}{2}$. 

First, consider the robust network $\cn$ of width $d$ from Theorem~\ref{thm:robust} that correctly classifies the dataset. In the proof of Theorem~\ref{thm:robust} we constructed the network $\cn(\bx) = \sum_{j=1}^d v_j \sigma(\bw_j^\top \bx + b_j)$ such that for every $j \in [d]$ we have $v_j=y_j$, $\bw_j = \frac{2\bx_j}{d}$ and $b_j=-1$.
Note that we have $y_i \cn(\bx_i) = 1$ for all $i \in [d]$.
In this network, each input $\bx_i$ is in the active region (i.e., the region of inputs where the ReLU is active) of exactly one neuron, 
and has distance of $\frac{\sqrt{d}}{2}$ from the active regions of the other neurons. Hence, adding a perturbation smaller than $\frac{\sqrt{d}}{2}$ to an input $\bx_i$ can affect only the contribution of one neuron to the output, and will not flip the output's sign.

Now, we consider a network $\cn'(\bx) = \sum_{j=1}^d v'_j \sigma(\bw'^\top_j \bx + b'_j)$, such that for all $j \in [d]$ we have $v'_j=y_j$, $\bw'_j = \frac{\bx_j}{d}$ and $b'_j=0$. Thus, the weights $\bw'_j$ are in the same directions as the weights $\bw_j$ of the network $\cn$, and in the network $\cn'$ the bias terms equal $0$. It is easy to verify that for all $i \in [d]$ we have $y_i \cn'(\bx_i) = 1$. Since $\snorm{\bw'_j} < \norm{\bw_j}$ and $|b'_j| < |b_j|$ for all $j \in [d]$, then the network $\cn'$ is better than $\cn$ in the sense of margin maximization. However, $\cn'$ is much less robust than $\cn$.
Indeed, note that in the network $\cn'$ each input $\bx_i$ is on the boundary of the active regions of all neurons $j \neq i$, that is, for all $j \neq i$ we have $\bw'^\top_j \bx_i + b'_j = 0$. 
As a result, a perturbation can affect the contribution of all neurons to the output.
Let $i \leq \frac{d}{2}$ and consider adding to $\bx_i$ the perturbation $\bz = \frac{4}{d} \sum_{j = \frac{d}{2}+1}^d \bx_j$. Thus, $\bz$ is spanned by all the inputs $\bx_j$ where $y_j=-1$, and affects the (negative) contribution of the corresponding neurons. It is not hard to show that $\norm{\bz}=2 \sqrt{2}$ and $\cn'(\bx_i + \bz) = -1$. Therefore, $\cn'$ is much less robust than $\cn$ (which required perturbations of size $\frac{\sqrt{d}}{2}$).
Thus, bias towards margin maximization might have a negative effect on the robustness. Of course, this is just an example, and in the following subsection we provide a more formal overview of the proof.

\stam{
Consider a network $\cn$ of width $d$, namely, $\cn(\bx) = \sum_{j=1}^d v_j \sigma(\bw_j^\top \bx + b_j)$. In the proof of Theorem~\ref{thm:robust} we showed that if we choose for every $j \in [d]$ the parameters $v_j=y_j$, $\bw_j = \frac{2\bx_j}{d}$ and $b_j=-1$, then we obtain a robust network that classifies the dataset correctly. In such a network, each input $\bx_i$ is the active region of exactly one neuron (i.e., the region of inputs where the ReLU is active), and has distance of $\frac{\sqrt{d}}{2}$ from the active regions of the other neurons.

Now, consider a network $\cn$ with the following parameters. For all $j \in [d]$ we have $v_j=y_j$, $\bw_j = \frac{\bx_j}{d}$ and $b_j=0$. Note that for all $i \in [d]$ we have $\cn(\bx_i)=y_i$. However, now each input $\bx_i$ is on the boundary of the active regions of all neurons $j \neq i$, that is, for all $j \neq i$ we have $\bw_j^\top \bx_i + b_j = 0$. 
Let $i \leq \frac{d}{2}$ and consider adding to $\bx_i$ the perturbation $\bz = \frac{4}{d} \sum_{j = \frac{d}{2}+1}^d \bx_j$. Thus, $\bz$ is spanned by all the directions $\bx_j$ which correspond to the label $-1$.
Note that $\norm{\bz}=2 \sqrt{2}$. Now, we have 
\begin{align*}
    \cn(\bx_i + \bz)
    &= \sum_{j=1}^d v_j \sigma\left(\bw_j^\top (\bx_i + \bz) + b_j\right)
    = \sum_{j=1}^d y_j \sigma\left(\frac{\bx_j^\top}{d} \left(\bx_i + \frac{4}{d} \sum_{l = \frac{d}{2}+1}^d \bx_l\right)\right)
    \\
    &= y_i \cdot 1 + \sum_{j=\frac{d}{2} + 1}^d y_j \sigma\left(\frac{\bx_j^\top}{d} \cdot \frac{4}{d} \sum_{l = \frac{d}{2}+1}^d \bx_l \right)
    \\
    &= 1 + \sum_{j=\frac{d}{2} + 1}^d (-1) \cdot \frac{4}{d}
    = -1~.
\end{align*}
Intuitively, the fact that we can decrease the output of the network by moving in $\frac{d}{2}$ directions allows us to use a smaller perturbation.
}

\subsection{Proof overview}

We denote $\cn_\btheta(\bx) = \sum_{j \in [k]} v_j \sigma(\bw_j^\top \bx + b_j)$. Thus, $\cn_\btheta$ is a network of width $k$, where the weights in the first layer are $\bw_1,\ldots,\bw_k$, the bias terms are $b_1,\ldots,b_k$, and the weights in the second layer are $v_1,\ldots,v_k$. We assume that $\cn_\btheta$ satisfies the KKT conditions of Problem~(\ref{eq:optimization problem}). We denote $J := [k]$, $J^+ := \{j \in J: v_j \geq 0\}$, and $J^- := \{j \in J: v_j < 0\}$. Note that since the dataset contains both examples with label $1$ and examples with label $-1$ then $J^+$ and $J^-$ are non-empty.
For simplicity, we assume that $|v_j|=1$ for all $j \in J$. That is, $v_j=1$ for $j \in J^+$ and $v_j=-1$ for $j \in J^-$. We emphasize that we focus here on the case where $|v_j|=1$ in order to simplify the description of the proof idea, and in the formal proof we do not have such an assumption.
We denote $p := \max_{i \neq j}|\inner{\bx_i,\bx_j}|$. Thus, by our assumption we have $m \leq \frac{d+1}{3(p+1)}$.

Since $\btheta$ satisfies the KKT conditions (see Appendix~\ref{app:KKT} for the formal definition) of Problem~(\ref{eq:optimization problem}), then there are $\lambda_1,\ldots,\lambda_m$ such that for every $j \in J$ we have
\begin{equation}
\label{eq:kkt condition w idea}
	\bw_j = \sum_{i \in I} \lambda_i \nabla_{\bw_j} \left( y_i \cn_{\btheta}(\bx_i) \right) =  \sum_{i \in I} \lambda_i y_i v_j \sigma'_{i,j} \bx_i~,
\end{equation}
where $\sigma'_{i,j}$ is a subgradient  of $\sigma$ at $\bw_j^\top \bx_i + b_j$, i.e., if $\bw_j^\top \bx_i + b_j \neq 0$ then $\sigma'_{i,j} = \sign(\bw_j^\top \bx_i + b_j)$, and otherwise $\sigma'_{i,j}$ is some value in $[0,1]$ (we note that in this case $\sigma'_{i,j}$ can be any value in $[0,1]$ and in our proof we do not have any further assumptions on it).  Also we have $\lambda_i \geq 0$ for all $i$, and $\lambda_i=0$ if 
$y_i  \cn_{\btheta}(\bx_i) \neq 1$. Likewise, we have
\begin{equation}
\label{eq:kkt condition b idea}
	b_j = \sum_{i \in I} \lambda_i \nabla_{b_j} \left( y_i \cn_{\btheta}(\bx_i) \right) =  \sum_{i \in I} \lambda_i y_i v_j \sigma'_{i,j}~.
\end{equation}

In the proof we use \eqref{eq:kkt condition w idea} and~(\ref{eq:kkt condition b idea}) in order to show that $\cn_\btheta$ is non-robust. 
We focus here on the case where $i \in I^+$ and we show that $\cn_\btheta(\bx_i - \bz) \leq -1$. The result for $i \in I^-$ can be obtained in a similar manner. We denote $\bx'_i = \bx_i - \bz$.
The proof consists of three main components: 
\begin{enumerate}
	\item We show that $y_i \cn_\btheta(\bx_i) = 1$, namely, $\bx_i$ attains exactly margin $1$. 
	\item For every $j \in J^+$ we have $\bw_j^\top \bx'_i  + b_j \leq \bw_j^\top \bx_i + b_j$. Since for $j \in J^+$ we have $v_j =1$, it implies that when moving from $\bx_i$ to $\bx'_i$ the non-negative contribution of the neurons in $J^+$ to the output does not increase. 
	\item When moving from $\bx_i$ to $\bx'_i$ the total contribution of the neurons in $J^-$ to the output (which is non-positive) decreases by at least $2$.
\end{enumerate}
Note that the combination of the above properties imply that $\cn_\btheta(\bx'_i) \leq -1$ as required. We now describe the main ideas for the proof of each part.

\subsection{The examples in the dataset attain margin $1$}

We show that all examples in the dataset attain margin $1$.
The main idea can be described informally as follows (see Lemma~\ref{lem:I is all} for the details).
Assume that there is $i \in I$ such that $y_i \cn_\btheta(\bx_i) > 1$. Hence, $\lambda_i = 0$.
Suppose w.l.o.g. that $i \in I^+$.
Using \eqref{eq:kkt condition w idea} and~(\ref{eq:kkt condition b idea}) we prove that in order to achieve $\cn_\btheta(\bx_i) > 1$ when $\lambda_i = 0$, there must be some $r \in I^+$ such that 
\[
	 \sum_{j \in J^+}  \lambda_r \sigma'_{r,j} 
	 = \max_{l \in I} \left( \sum_{j \in J^+}  \lambda_l \sigma'_{l,j} \right)
	 > \frac{3}{d+1}~.
\]
Recall that by \eqref{eq:kkt condition w idea}, for $j \in J^+$ the term $\lambda_r \sigma'_{r,j} = \lambda_r v_j \sigma'_{r,j}$ is the coefficient of $y_r \bx_r = \bx_r$ in the expression for $\bw_j$.  Hence, $\sum_{j \in J^+}  \lambda_r \sigma'_{r,j}$ corresponds to the total sum of coefficients of $\bx_r$ over all neurons in $J^+$. Thus, our lower bound on $\sum_{j \in J^+}  \lambda_r \sigma'_{r,j}$ implies intuitively that the total sum of coefficients of $\bx_r$ is large. We use this fact in order to show that $\bx_r$ attains margin strictly larger than $1$, which implies $\lambda_r = 0$ in contradiction to our lower bound on $\sum_{j \in J^+}  \lambda_r \sigma'_{r,j}$.

\stam{
We show that all examples in the dataset attain margin $1$.
Assume that there is $i \in I$ such that $y_i \cn_\btheta(\bx_i) > 1$. Hence, $\lambda_i = 0$.
Suppose that $i \in I^+$ (a similar calculation holds also for $i \in I^-$). Then, we have
\[
	1 
	< y_i \cn_\btheta(\bx_i)
	= 1 \cdot  \sum_{j \in J} v_j \sigma(\bw_j^\top \bx_i + b_j)
	\leq \sum_{j \in J^+} v_j \sigma(\bw_j^\top \bx_i + b_j)
	\leq \sum_{j \in J^+} 1 \cdot \left| \bw_j^\top \bx_i + b_j \right|~.
\]
By \eqref{eq:kkt condition w idea} and~(\ref{eq:kkt condition b idea}) the above equals
	\begin{align*}
		\sum_{j \in J^+} &\left|  \sum_{l \in I} \lambda_l y_l v_j \sigma'_{l,j} \bx_l^\top \bx_i +  \sum_{l \in I} \lambda_l y_l v_j \sigma'_{l,j} \right|
		\leq \sum_{j \in J^+} 1 \cdot \sum_{l \in I \setminus \{i\}} \left|  \lambda_l y_l \cdot 1 \cdot \sigma'_{l,j} ( \bx_l^\top \bx_i  + 1 )  \right| 
		\\
		&\leq \sum_{j \in J^+} \sum_{l \in I \setminus \{i\}} \lambda_l | y_l | \sigma'_{l,j} ( p + 1 )  
		= ( p + 1 ) \sum_{l \in I \setminus \{i\}} \sum_{j \in J^+}  \lambda_l \sigma'_{l,j} 
		\leq (p + 1 ) \cdot |I| \cdot \max_{l \in I} \left( \sum_{j \in J^+}  \lambda_l \sigma'_{l,j} \right)~,
	\end{align*}

where the first inequality uses $\lambda_i=0$.
Hence, 
\[
	 \max_{l \in I} \left( \sum_{j \in J^+}  \lambda_l \sigma'_{l,j} \right) 
	 > \frac{1}{m(p+1)}
	 \geq \frac{3}{d+1}~.
\]

Thus, if there is $i \in I^+$ that attains margin larger than $1$, then we can obtain a lower bound for $\max_{l \in I} \left( \sum_{j \in J^+}  \lambda_l \sigma'_{l,j} \right)$.
Recall that by \eqref{eq:kkt condition w idea}, for $j \in J^+$ the term $\lambda_l \sigma'_{l,j} = \lambda_l v_j \sigma'_{l,j}$ is the coefficient of $y_l \bx_l$ in the expression for $\bw_j$.  Hence, $\sum_{j \in J^+}  \lambda_l \sigma'_{l,j}$ corresponds to the total sum of coefficients of $y_l \bx_l$ over all neurons in $J^+$. Thus, our lower bound on $\max_{l \in I} \left( \sum_{j \in J^+}  \lambda_l \sigma'_{l,j} \right)$ implies intuitively that for a maximizer $r \in I$ of this expression the total sum of coefficients of $y_r \bx_r$ is large. We use this fact in order to reach a contradiction. While the formal proof is rather technical (see Lemma~\ref{lem:I is all}), the main idea here can be described informally as follows. Either we have $r \in I^-$, in which case we can reach a contradiction to the choice of $r$ as a maximizer of $\sum_{j \in J^+}  \lambda_l \sigma'_{l,j}$, or that $r \in I^+$, in which case we can use the fact that the total sum of coefficients of $y_r \bx_r$ is large in order to show that $\bx_r$ attains margin strictly larger than $1$, which implies $\lambda_r = 0$ in contradiction to our lower bound on $\sum_{j \in J^+}  \lambda_r \sigma'_{r,j}$.
}

\subsection{The contribution of the neurons $J^+$ to the output does not increase} \label{subsec:Jplus not increase}

For $i \in I^+$ and $\bx'_i = \bx_i - \bz = \bx_i - \eta \sum_{l \in I} y_l \bx_l$ we show that for every $j \in J^+$ we have $\bw_j^\top \bx'_i  + b_j \leq \bw_j^\top \bx_i + b_j$.
Using \eqref{eq:kkt condition w idea} we have 
\begin{align*}
		\bw_j^\top \sum_{l \in I} y_l \bx_l
		&= \sum_{q \in I} \lambda_q y_q v_j \sigma'_{q,j} \bx_q^\top  \sum_{l \in I} y_l \bx_l
		= \sum_{q \in I} \lambda_q \sigma'_{q,j} \left(y_q^2 \bx_q^\top \bx_q + \sum_{l \in I,~l \neq q} y_q y_l \bx_q^\top \bx_l \right)
		\\
		&\geq \sum_{q \in I} \lambda_q \sigma'_{q,j} \left(d + \sum_{l \in I,~l \neq q} (-p) \right)
		\geq \sum_{q \in I} \lambda_q \sigma'_{q,j} \left(d - mp\right)~.
	\end{align*}
	
Therefore,
\[
	\bw_j^\top \bx'_i + b_j
	= \bw_j^\top \bx_i + b_j - \eta \bw_j^\top \sum_{l \in I} y_l \bx_l
	\leq \bw_j^\top \bx_i + b_j - \eta \sum_{q \in I} \lambda_q \sigma'_{q,j} \left(d - mp\right)~.
\]
By our assumption on $m$ it follows easily that $d-mp > 0$, and hence we conclude that $\bw_j^\top \bx'_i + b_j \leq \bw_j^\top \bx_i + b_j$.

\subsection{The contribution of the neurons $J^-$ to the output decreases}

We show that for $i \in I^+$ and $\bx'_i = \bx_i - \bz = \bx_i - \eta \sum_{l \in I} y_l \bx_l$, when moving from $\bx_i$ to $\bx'_i$ the total contribution of the neurons in $J^-$ to the output decreases by at least $2$. Since for every $j \in J^-$ we have $v_j=-1$ then we need to show that the sum of the outputs of the neurons $J^-$ increases by at least $2$. 

By a similar calculation to the one given in Subsection~\ref{subsec:Jplus not increase} we obtain that for every $j \in J^-$ and $\eta'>0$ we have 
\begin{equation} \label{eq:move in Jminus}
	\bw_j^\top \left(\bx_i - \eta' \sum_{l \in I} y_l \bx_l \right) + b_j 
	\geq  \bw_j^\top \bx_i + b_j + \eta' \sum_{q \in I} \lambda_q \sigma'_{q,j} \left(d - mp\right)~.
\end{equation}
Recall that $d - mp > 0$. Hence, for every $j \in J^-$ the input to neuron $j$ increases by at least $\eta \sum_{q \in I} \lambda_q \sigma'_{q,j} \left(d - mp\right)$ when moving from $\bx_i$ to $\bx'_i$. However, if $\bw_j^\top \bx_i + b_j < 0$, namely, at $\bx_i$ the input to neuron $j$ is negative, then increasing the input may not affect the output of the network. Indeed, by moving from $\bx_i$ to $\bx'_i$ we might increase the input to neuron $j$ but if it is still negative then the output of neuron $j$ remains $0$. 

In order to circumvent this issue we analyze the perturbation $\bz = \eta \sum_{l \in I} y_l \bx_l$ in two stages as follows. We define $\eta = \eta_1 + \eta_2$ for some  $\eta_1,\eta_2$ to be chosen later.
Let $\tilde{\bx}_i = \bx_1 - \eta_1  \sum_{l \in I} y_l \bx_l$. We prove that for every $j \in J^-$ we have $\bw_j^\top \bx_i + b_j \geq - (p+1) \sum_{q \in I} \lambda_q \sigma'_{q,j}$, namely, the input to neuron $j$ might be negative but it can be lower bounded. Hence, \eqref{eq:move in Jminus} implies that by choosing $\eta_1 = \frac{p+1}{d-mp}$ we have $\bw_j^\top \tilde{\bx}_i + b_j \geq 0$ for all $j \in J^-$. That is, in the first stage we move from $\bx_i$ to $\tilde{\bx}_i$ and increase the inputs to all neurons in $J^-$ such that at $\tilde{\bx}_i$ they are least $0$. In the second stage we move  from $\tilde{\bx}_i$ to $\bx'_i$ (using the perturbation $\eta_2 \sum_{l \in I} y_l \bx_l$). Note that when we move from $\tilde{\bx}_i$ to $\bx'_i$, every increase in the inputs to the neurons in $J^-$ results in a decrease in the output of the network.

Since we need the output of the network to decrease by at least $2$, and since for every $j \in J^-$ we have $v_j=-1$, then when moving from $\tilde{\bx}_i$ to $\bx'_i$ we need the sum of the inputs to the neurons $J^-$ to increase by at least $2$. Similarly to \eqref{eq:move in Jminus}, we obtain that when moving from $\tilde{\bx}_i$ to $\bx'_i$  we increase the sum of the inputs to the neurons $J^-$ by at least 
\begin{equation} \label{eq:choose eta2}
	 \eta_2 \sum_{j \in J^-} \sum_{q \in I} \lambda_q \sigma'_{q,j} \left(d - mp\right)~.
\end{equation}
Then, we prove a lower bound for $\sum_{j \in J^-} \sum_{q \in I} \lambda_q \sigma'_{q,j}$. We show that such a lower bound can be achieved, since if $\sum_{j \in J^-} \sum_{q \in I} \lambda_q \sigma'_{q,j}$ is too small then it is impossible to have margin $1$ for all examples in $I^-$.
This lower bound allows us to choose $\eta_2$ such that the expression in \eqref{eq:choose eta2} is at least $2$. Finally, it remains to analyze $\norm{\bz} = (\eta_1+\eta_2) \norm{ \sum_{l \in I} y_l \bx_l}$ and show that it satisfies the required upper bound.

\section{Experiments}
We complement our theoretical results by an empirical study on the robustness of depth-$2$ ReLU networks trained on synthetically generated datasets. \thmref{thm:non-robust} shows that networks trained with gradient flow converge to non-robust networks, but there are still a couple of questions remaining regarding the scope and limitations of this result. First, although the theorem limits the number of samples, we show here that the result applies also in cases when there are much more training samples. Second, the theorem does not depend on the width of the trained network, and we show that even when the size of the training set is much larger than the input dimension, the width of the network does not affect the size of the minimal perturbation that changes the label of the samples. 

\paragraph{Experimental setting.}
In all of our experiments we trained a depth-$2$ fully-connected neural network with ReLU activations using SGD with a batch size of $5,000$. For experiments with less than $5,000$ samples, this is equivalent to full batch gradient descent.
We used the exponential loss, although we also tested on logistic loss and obtained similar results. Each experiment was done using $5$ different random seeds, and we present the results in terms of the  average and (when relevant) standard deviation over these runs. 
We used an increasing learning rate to 
accelerate the convergence
to the KKT point (which theoretically is only reached at infinity). We began training with a learning rate of $10^{-5}$ and increased it by a factor of $1.1$ every $100$ iterations. We finished training after we achieved a loss smaller than $10^{-30}$. We emphasize that since we use an exponentially tailed loss, the gradients are extremely small at late stages of training, hence to achieve such small loss we must use an increasing learning rate. We implemented our experiments using PyTorch (\cite{paszke2019pytorch}).

\paragraph{Dataset.}
In all of our experiments we sampled $(\bx,y)\in\reals^d \times \{-1,1\}$ where $\bx \sim \cu(\sqrt{d} \cdot \bbs^{d-1})$ and $y$ is uniform on $\{-1,1\}$. We also tested on $\bx$ sampled from a Gaussian distribution with variance $\frac{1}{d}$ and obtained similar results. Here we only report the results on the uniform distribution. 

\paragraph{Margin.} In our experimental results we defined the margin in the following way: We train a network $\cn(\bx)$ over a dataset $(\bx_1,y_1),...,(\bx_m,y_m)\in \reals^d\times \{-1,1\}$. Suppose that after training all the samples are classified correctly (this happened in all of our experiments), i.e. $\cn(\bx_i)y_i > 0$. We define $i_{\text{marg}}:= \argmin_i \cn(\bx_i)y_i$. Finally, we say that a sample $(\bx_i,y_i)$ is \emph{on the margin} if $\cn(\bx_i)y_i \leq 1.1 \cdot \cn(\bx_{i_\text{marg}})y_{i_\text{marg}}$. In words, we consider $\bx_{i_\text{marg}}$ to be a sample which is exactly on the margin, but we also allow $10\%$ slack for other samples to be on the margin. We must allow some slack, because in practice we cannot converge exactly to the KKT point, where all the samples on the margin have the exact same output. 

\subsection{Results}

\begin{figure}[t]
    \centering
    \begin{tabular}{ccc}
    \includegraphics[width=0.47\textwidth]{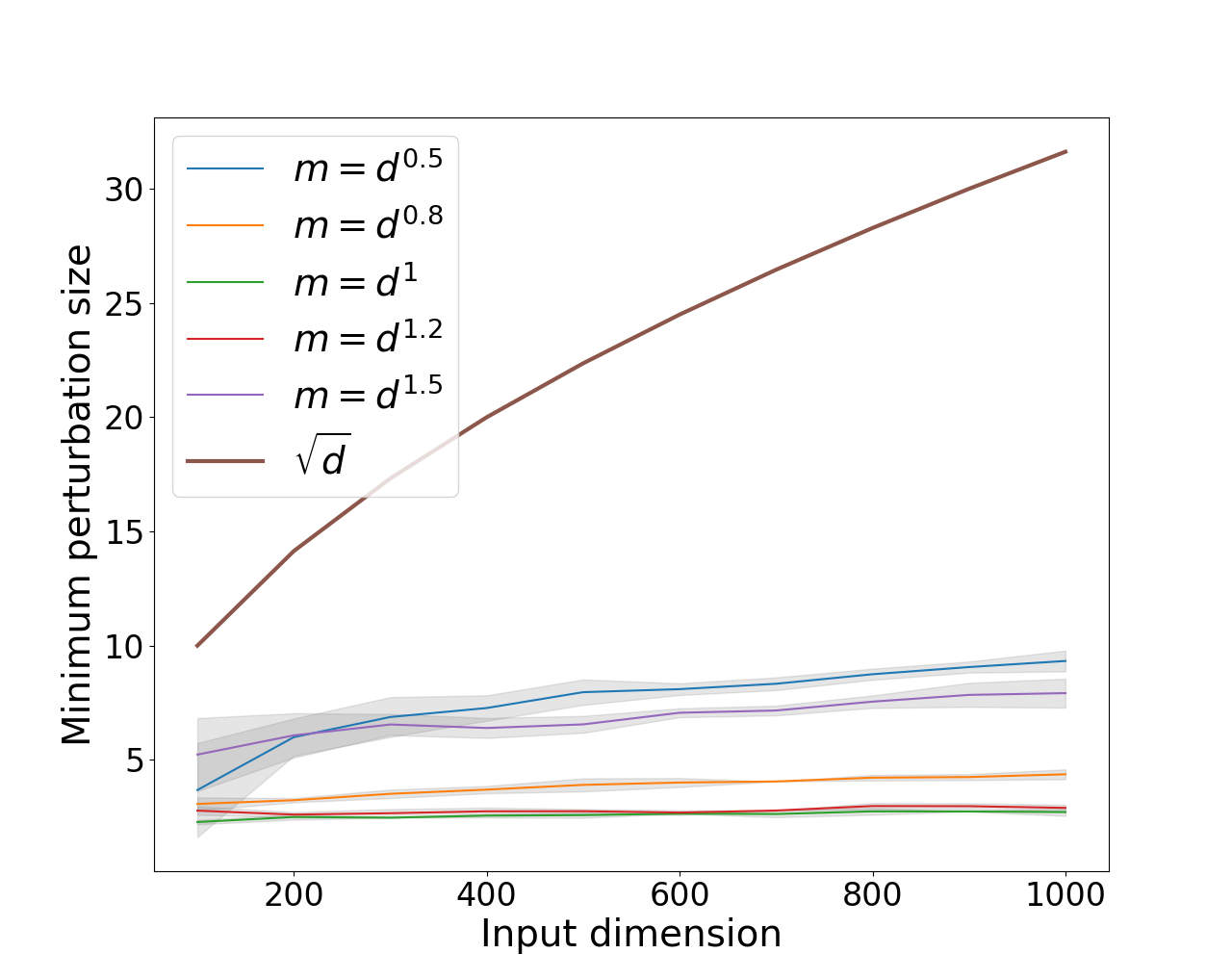} &
    \includegraphics[width=0.47\textwidth]{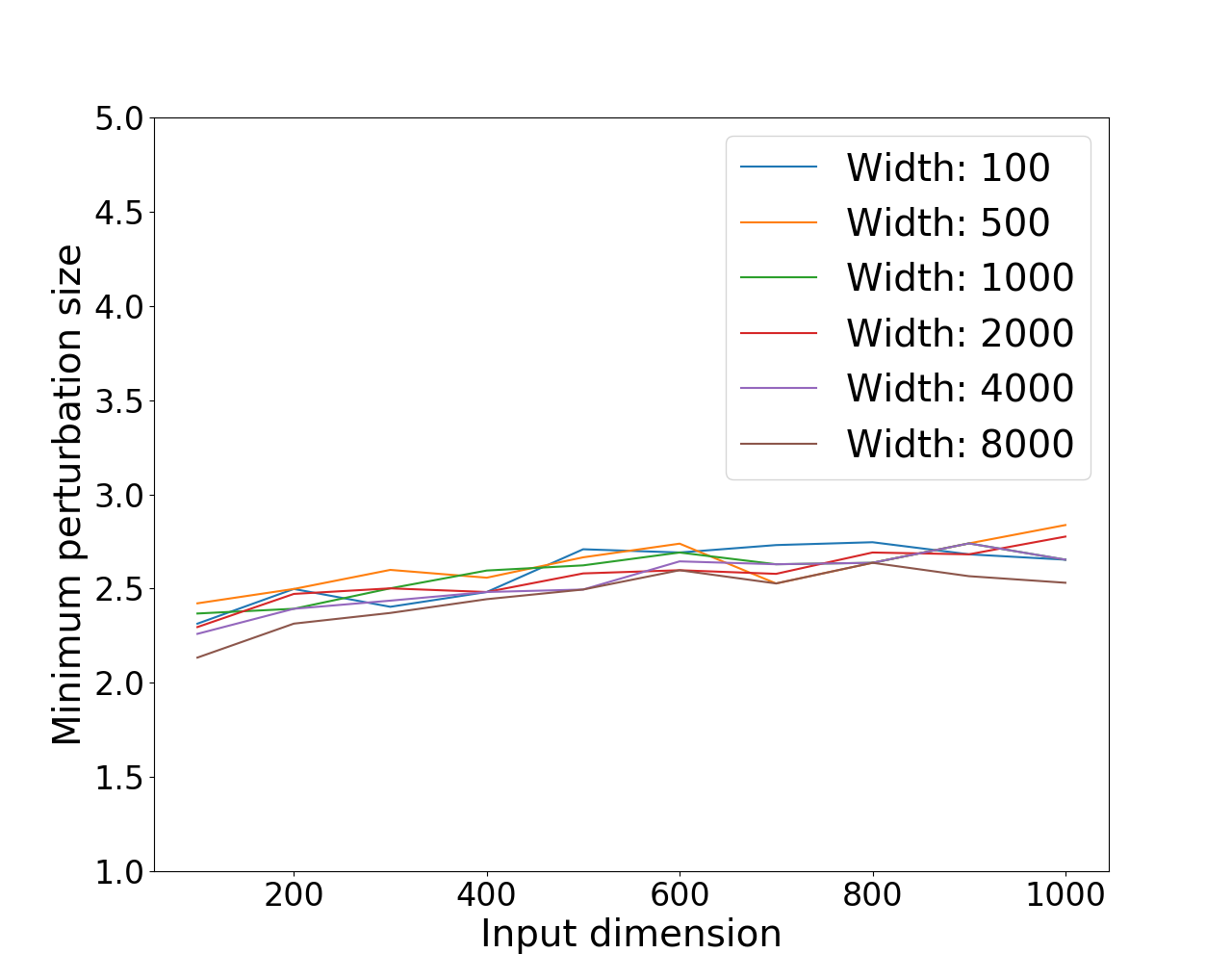} &\\
    (a) & (b) &
    \end{tabular}
    \caption{The effects of the width and the number of samples on the minimal perturbation size. The x-axis corresponds to the input dimension of the samples. The y-axis corresponds to the minimal perturbation size to change the labels of \emph{all} the samples on the margin. We defined the perturbation direction as in \thmref{thm:non-robust}: $\bz:= \sum_{i\in I}y_i\bx_i$, where $I$ represents the set of samples that are on the margin. (a) The minimal perturbation size plotted for different sample sizes. Here the sample size is $m=d^{\alpha}$, where $d$ is the input dimension and $\alpha = 0.5,0.8,1,1.2,1.5$. It is clear that in all of the 
    experiments, the minimal perturbation size is well beyond the plot of $\sqrt{d}$, for which the perturbation is not adversarial. 
    (b) The minimal perturbation size for different widths of the network. Here, the number of samples is $m=d$.}
    \label{fig:pert_size}
\end{figure}
\paragraph{Minimum perturbation size.}
\figref{fig:pert_size}(a) shows that the perturbation $\bz$ defined in \thmref{thm:non-robust} can change the labels of all the samples on the margin, even when there are much more samples than stated in \thmref{thm:non-robust}. To this end, we trained our model on $m=d^\alpha$ samples, where $\alpha=0.5,0.8,1,1.2,1.5$ and $d\in \{100,200,\dots,1000\}$. Note that \thmref{thm:non-robust} only considers the case of $\alpha=0.5$ for data which is uniformly distributed. The width of the network is $1,000$. After training is completed, we considered perturbations in the direction of $\bz := \sum_{i\in I}y_i\bx_i$, where $I$ represents the set of samples that are on the margin. The y-axis represents the minimal $c>0$ such that for all $i\in I$ we have that $\cn\left(\bx_i - y_ic\frac{\bz}{\norm{\bz}}\right)\cdot y_i < 0$. In words, we plot the minimal size of the perturbation which changes the labels of \emph{all} the samples on the margin. We emphasize that we used the same perturbation for all the samples.

We also plot $\sqrt{d}$, as a perturbation above this line can trivially change the labels of all points.
Recall that by \thmref{thm:robust}, there exists a $\sqrt{d}$-robust network (if the width of the network is at least the size of the dataset).
From \figref{fig:pert_size}(a),
it is clear that the 
minimal perturbation size is much smaller than $\sqrt{d}$. We also plot the standard deviation over the $5$ different random seeds, showing that our results are consistent.

It is also important to understand how many samples lie on the margin, since our perturbation changes the label of these samples. \figref{fig:margin}(a) plots the ratio of samples on the margin out of the total number of samples, and \figref{fig:margin}(b) plots the number of samples not on the margin. These plots correspond to the same experiments as in \figref{fig:pert_size}(a), where the number of samples depends on the input dimension. For $m=d^{0.5}$ all the samples are on the margin, as was proven in \lemref{lem:I is all}. For $m=d^{\alpha}$ where $\alpha = 0.8,1,1.2,1.5$, it can be seen that at least $92\%$ of the samples are on the margin. Together with \figref{fig:pert_size}(a), it shows that a single perturbation with small magnitude can change the label of almost all the samples. We remind that this happens when we sample from the uniform distribution, and not when there is a cluster structure as discussed before \corollaryref{cor:non-robust2}.

\paragraph{Effect of the width.}
\figref{fig:pert_size}(b) shows that the width of the network does not have a significant effect on its robustness. The y-axis is the same as in \figref{fig:pert_size}(a), and in all the experiments the number of samples is equal to the input dimension (i.e. $m=d$). We tested on neural networks with width varying from 100 to 8000. The minimal perturbation size in all the experiments is almost the same, regardless of the width. This finding matches the result from \thmref{thm:non-robust}, but for datasets much larger than the bound from our theory.



\begin{figure}[t]
    \centering
    \begin{tabular}{ccc}
    \includegraphics[width=0.47\textwidth]{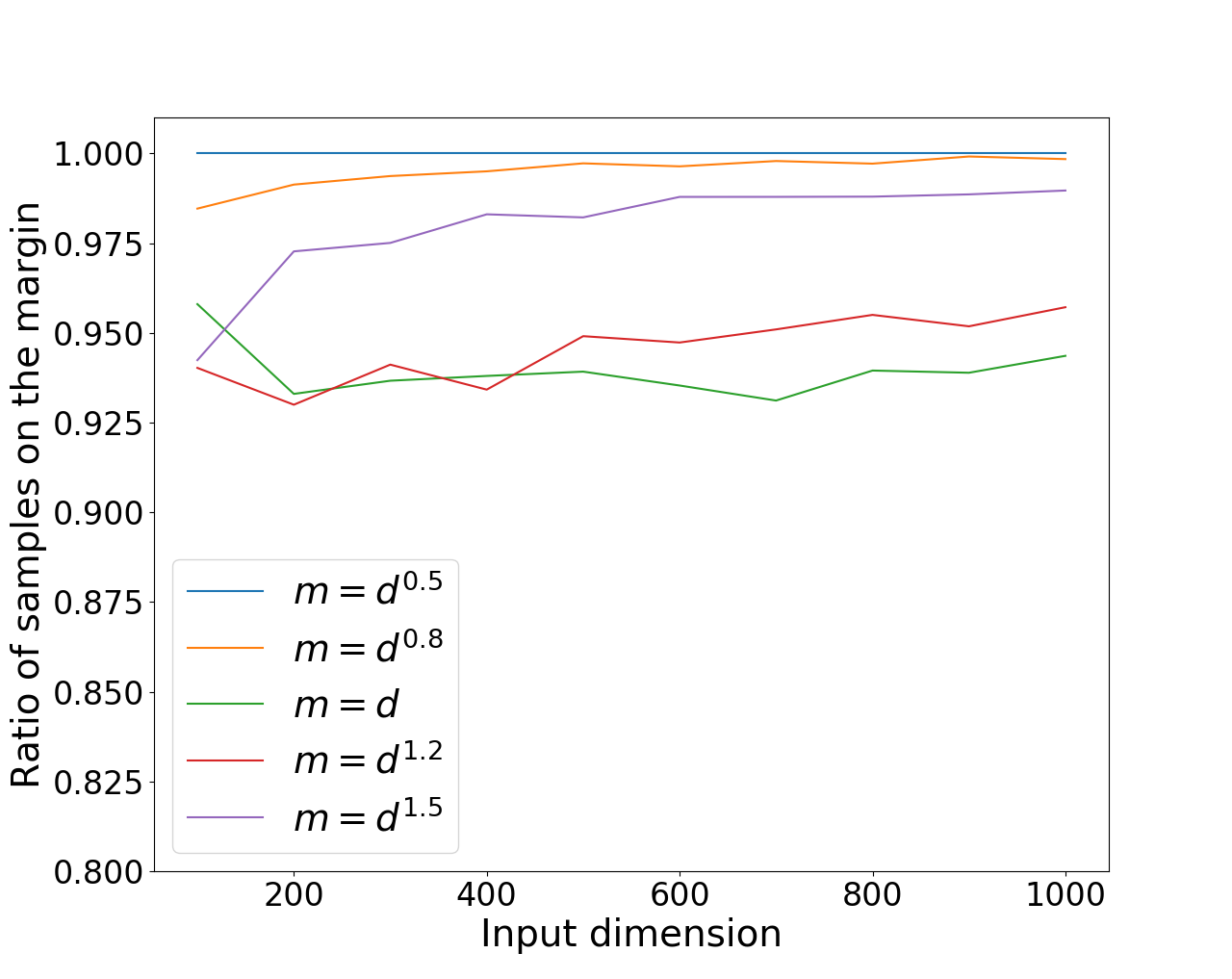} &
    \includegraphics[width=0.47\textwidth]{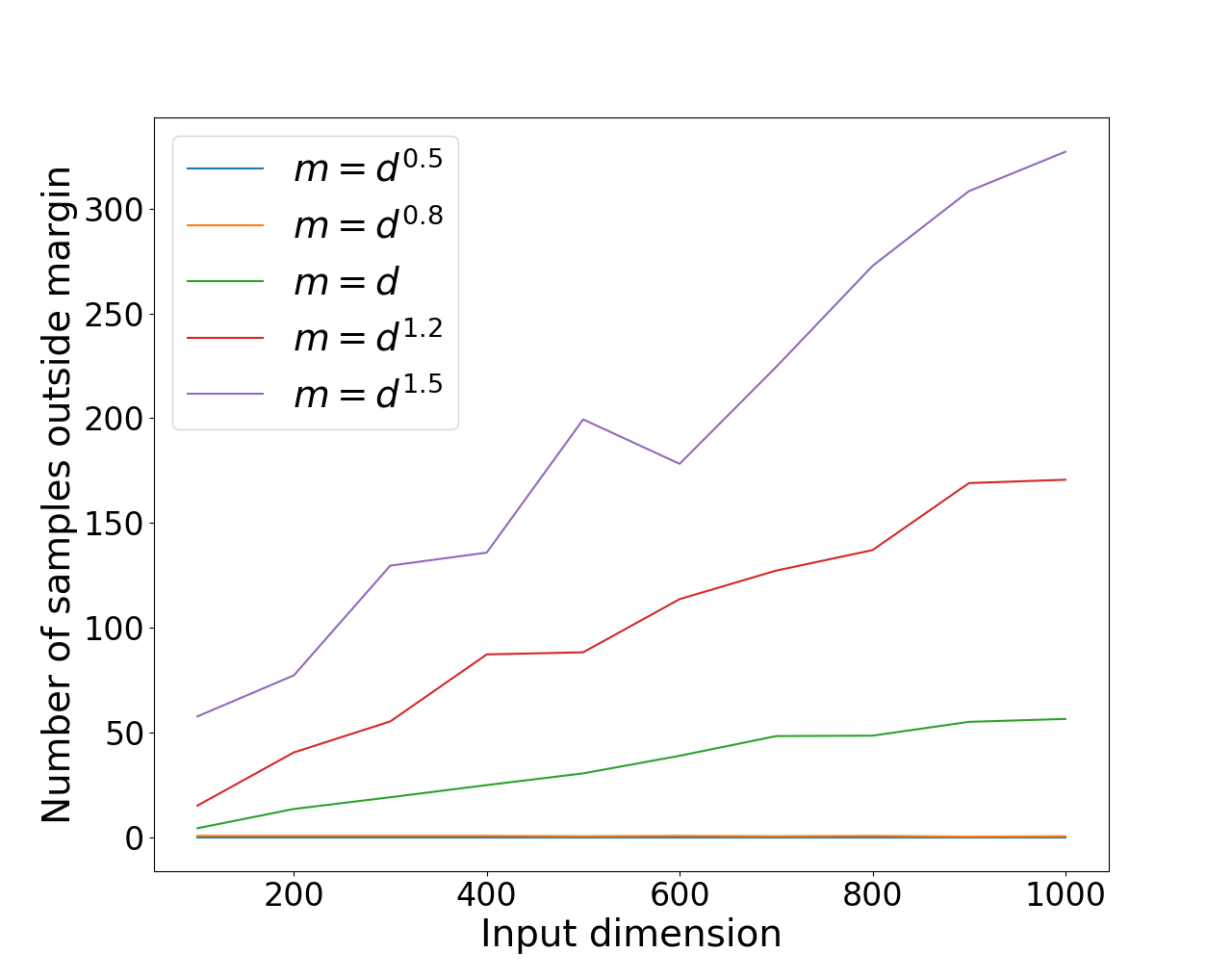} &\\
    (a) & (b) &
    \end{tabular}
    \caption{The number of samples on the margin. The x-axis is the input dimension. Each line plot represents a different number of samples, scaling with the input dimension. (a) The ratio of the number of samples on the margin out of the total number of samples. (b) The number of samples not on the margin. }
    \label{fig:margin}
\end{figure}

\section{Discussion}

In this paper, we showed that in depth-$2$ ReLU networks gradient flow is biased towards non-robust solutions even when robust solutions exist. To that end, we utilized prior results on the implicit bias of homogeneous models with exponentially-tailed losses. 
While the existing works on implicit bias are mainly in the context of generalization, in this work we make a first step towards understanding its implications on robustness.
We believe that the phenomenon of adversarial examples is an implication of the implicit bias in neural networks, and hence the relationship between these two topics should be further studied. Thus, understanding the implicit bias may be key to explaining the existence of adversarial examples. 

We note that we show non-robustness w.r.t. the training dataset rather than on test data. Intuitively, achieving robustness on the training set should be easier than on test data, and we show a negative result already for the former. We believe that extending our approach to robustness on test data is an interesting direction for future research.

There are some additional important open questions that naturally arise from our results. First, our theoretical negative results assume that the size of the dataset (or the size of the subset of examples that attain the margin) is upper bounded by $\co\left(\frac{d}{\max_{i \neq j}|\inner{\bx_i,\bx_j}|} \right)$. This assumption is required in our proof for technical reasons, but we conjecture that it can be significantly relaxed, and our experiments support this conjecture.
Another natural question is to extend our results to more architectures, such as deeper ReLU networks. Finally, it would be interesting to study whether other optimization methods have different implicit bias which directs toward robust networks.

\subsection*{Acknowledgements}
This research is supported in part by European Research Council (ERC) grant 754705.

\bibliographystyle{abbrvnat}
\bibliography{bib}

\begin{thebibliography}{37}
\providecommand{\natexlab}[1]{#1}
\providecommand{\url}[1]{\texttt{#1}}
\expandafter\ifx\csname urlstyle\endcsname\relax
  \providecommand{\doi}[1]{doi: #1}\else
  \providecommand{\doi}{doi: \begingroup \urlstyle{rm}\Url}\fi

\bibitem[Akhtar and Mian(2018)]{akhtar2018threat}
N.~Akhtar and A.~Mian.
\newblock Threat of adversarial attacks on deep learning in computer vision: A
  survey.
\newblock \emph{Ieee Access}, 6:\penalty0 14410--14430, 2018.

\bibitem[Allen-Zhu and Li(2020)]{allen2020feature}
Z.~Allen-Zhu and Y.~Li.
\newblock Feature purification: How adversarial training performs robust deep
  learning.
\newblock \emph{arXiv preprint arXiv:2005.10190}, 2020.

\bibitem[Athalye et~al.(2018)Athalye, Carlini, and
  Wagner]{athalye2018obfuscated}
A.~Athalye, N.~Carlini, and D.~Wagner.
\newblock Obfuscated gradients give a false sense of security: Circumventing
  defenses to adversarial examples.
\newblock In \emph{International conference on machine learning}, pages
  274--283. PMLR, 2018.

\bibitem[Bartlett et~al.(2021)Bartlett, Bubeck, and
  Cherapanamjeri]{bartlett2021adversarial}
P.~Bartlett, S.~Bubeck, and Y.~Cherapanamjeri.
\newblock Adversarial examples in multi-layer random relu networks.
\newblock \emph{Advances in Neural Information Processing Systems}, 34, 2021.

\bibitem[Bubeck et~al.(2019)Bubeck, Lee, Price, and
  Razenshteyn]{bubeck2019adversarial}
S.~Bubeck, Y.~T. Lee, E.~Price, and I.~Razenshteyn.
\newblock Adversarial examples from computational constraints.
\newblock In \emph{International Conference on Machine Learning}, pages
  831--840. PMLR, 2019.

\bibitem[Bubeck et~al.(2021)Bubeck, Cherapanamjeri, Gidel, and
  Combes]{bubeck2021single}
S.~Bubeck, Y.~Cherapanamjeri, G.~Gidel, and R.~T.~d. Combes.
\newblock A single gradient step finds adversarial examples on random
  two-layers neural networks.
\newblock \emph{Advances in Neural Information Processing Systems}, 34, 2021.

\bibitem[Carlini and Wagner(2017)]{carlini2017adversarial}
N.~Carlini and D.~Wagner.
\newblock Adversarial examples are not easily detected: Bypassing ten detection
  methods.
\newblock In \emph{Proceedings of the 10th ACM workshop on artificial
  intelligence and security}, pages 3--14, 2017.

\bibitem[Carlini and Wagner(2018)]{carlini2018audio}
N.~Carlini and D.~Wagner.
\newblock Audio adversarial examples: Targeted attacks on speech-to-text.
\newblock In \emph{2018 IEEE Security and Privacy Workshops (SPW)}, pages 1--7.
  IEEE, 2018.

\bibitem[Clarke et~al.(2008)Clarke, Ledyaev, Stern, and
  Wolenski]{clarke2008nonsmooth}
F.~H. Clarke, Y.~S. Ledyaev, R.~J. Stern, and P.~R. Wolenski.
\newblock \emph{Nonsmooth analysis and control theory}, volume 178.
\newblock Springer Science \& Business Media, 2008.

\bibitem[Croce and Hein(2020)]{croce2020reliable}
F.~Croce and M.~Hein.
\newblock Reliable evaluation of adversarial robustness with an ensemble of
  diverse parameter-free attacks.
\newblock In \emph{International conference on machine learning}, pages
  2206--2216. PMLR, 2020.

\bibitem[Daniely and Shacham(2020)]{daniely2020most}
A.~Daniely and H.~Shacham.
\newblock Most relu networks suffer from $\ell_2$ adversarial perturbations.
\newblock \emph{Advances in Neural Information Processing Systems}, 33, 2020.

\bibitem[Dutta et~al.(2013)Dutta, Deb, Tulshyan, and
  Arora]{dutta2013approximate}
J.~Dutta, K.~Deb, R.~Tulshyan, and R.~Arora.
\newblock Approximate kkt points and a proximity measure for termination.
\newblock \emph{Journal of Global Optimization}, 56\penalty0 (4):\penalty0
  1463--1499, 2013.

\bibitem[Fang(2018)]{fang2018symmetric}
K.~W. Fang.
\newblock \emph{Symmetric multivariate and related distributions}.
\newblock Chapman and Hall/CRC, 2018.

\bibitem[Fawzi et~al.(2018)Fawzi, Fawzi, and Fawzi]{fawzi2018adversarial}
A.~Fawzi, H.~Fawzi, and O.~Fawzi.
\newblock Adversarial vulnerability for any classifier.
\newblock \emph{arXiv preprint arXiv:1802.08686}, 2018.

\bibitem[Ge et~al.(2021)Ge, Singla, Basri, and Jacobs]{ge2021shift}
S.~Ge, V.~Singla, R.~Basri, and D.~Jacobs.
\newblock Shift invariance can reduce adversarial robustness.
\newblock \emph{arXiv preprint arXiv:2103.02695}, 2021.

\bibitem[Goodfellow et~al.(2014)Goodfellow, Shlens, and
  Szegedy]{goodfellow2014explaining}
I.~J. Goodfellow, J.~Shlens, and C.~Szegedy.
\newblock Explaining and harnessing adversarial examples.
\newblock \emph{arXiv preprint arXiv:1412.6572}, 2014.

\bibitem[Ji and Telgarsky(2020)]{ji2020directional}
Z.~Ji and M.~Telgarsky.
\newblock Directional convergence and alignment in deep learning.
\newblock \emph{arXiv preprint arXiv:2006.06657}, 2020.

\bibitem[Khoury and Hadfield-Menell(2018)]{khoury2018geometry}
M.~Khoury and D.~Hadfield-Menell.
\newblock On the geometry of adversarial examples.
\newblock \emph{arXiv preprint arXiv:1811.00525}, 2018.

\bibitem[Liu et~al.(2016)Liu, Chen, Liu, and Song]{liu2016delving}
Y.~Liu, X.~Chen, C.~Liu, and D.~Song.
\newblock Delving into transferable adversarial examples and black-box attacks.
\newblock \emph{arXiv preprint arXiv:1611.02770}, 2016.

\bibitem[Lyu and Li(2019)]{lyu2019gradient}
K.~Lyu and J.~Li.
\newblock Gradient descent maximizes the margin of homogeneous neural networks.
\newblock \emph{arXiv preprint arXiv:1906.05890}, 2019.

\bibitem[Madry et~al.(2017)Madry, Makelov, Schmidt, Tsipras, and
  Vladu]{madry2017towards}
A.~Madry, A.~Makelov, L.~Schmidt, D.~Tsipras, and A.~Vladu.
\newblock Towards deep learning models resistant to adversarial attacks.
\newblock \emph{arXiv preprint arXiv:1706.06083}, 2017.

\bibitem[Moosavi-Dezfooli et~al.(2017)Moosavi-Dezfooli, Fawzi, Fawzi, and
  Frossard]{moosavi2017universal}
S.-M. Moosavi-Dezfooli, A.~Fawzi, O.~Fawzi, and P.~Frossard.
\newblock Universal adversarial perturbations.
\newblock In \emph{Proceedings of the IEEE conference on computer vision and
  pattern recognition}, pages 1765--1773, 2017.

\bibitem[Papernot et~al.(2016)Papernot, McDaniel, Wu, Jha, and
  Swami]{papernot2016distillation}
N.~Papernot, P.~McDaniel, X.~Wu, S.~Jha, and A.~Swami.
\newblock Distillation as a defense to adversarial perturbations against deep
  neural networks.
\newblock In \emph{2016 IEEE symposium on security and privacy (SP)}, pages
  582--597. IEEE, 2016.

\bibitem[Papernot et~al.(2017)Papernot, McDaniel, Goodfellow, Jha, Celik, and
  Swami]{papernot2017practical}
N.~Papernot, P.~McDaniel, I.~Goodfellow, S.~Jha, Z.~B. Celik, and A.~Swami.
\newblock Practical black-box attacks against machine learning.
\newblock In \emph{Proceedings of the 2017 ACM on Asia conference on computer
  and communications security}, pages 506--519, 2017.

\bibitem[Paszke et~al.(2019)Paszke, Gross, Massa, Lerer, Bradbury, Chanan,
  Killeen, Lin, Gimelshein, Antiga, et~al.]{paszke2019pytorch}
A.~Paszke, S.~Gross, F.~Massa, A.~Lerer, J.~Bradbury, G.~Chanan, T.~Killeen,
  Z.~Lin, N.~Gimelshein, L.~Antiga, et~al.
\newblock Pytorch: An imperative style, high-performance deep learning library.
\newblock \emph{Advances in neural information processing systems},
  32:\penalty0 8026--8037, 2019.

\bibitem[Schmidt et~al.(2018)Schmidt, Santurkar, Tsipras, Talwar, and
  Madry]{schmidt2018adversarially}
L.~Schmidt, S.~Santurkar, D.~Tsipras, K.~Talwar, and A.~Madry.
\newblock Adversarially robust generalization requires more data.
\newblock \emph{arXiv preprint arXiv:1804.11285}, 2018.

\bibitem[Shafahi et~al.(2018)Shafahi, Huang, Studer, Feizi, and
  Goldstein]{shafahi2018adversarial}
A.~Shafahi, W.~R. Huang, C.~Studer, S.~Feizi, and T.~Goldstein.
\newblock Are adversarial examples inevitable?
\newblock \emph{arXiv preprint arXiv:1809.02104}, 2018.

\bibitem[Shah et~al.(2020)Shah, Tamuly, Raghunathan, Jain, and
  Netrapalli]{shah2020pitfalls}
H.~Shah, K.~Tamuly, A.~Raghunathan, P.~Jain, and P.~Netrapalli.
\newblock The pitfalls of simplicity bias in neural networks.
\newblock \emph{arXiv preprint arXiv:2006.07710}, 2020.

\bibitem[Shamir et~al.(2019)Shamir, Safran, Ronen, and
  Dunkelman]{shamir2019simple}
A.~Shamir, I.~Safran, E.~Ronen, and O.~Dunkelman.
\newblock A simple explanation for the existence of adversarial examples with
  small hamming distance.
\newblock \emph{arXiv preprint arXiv:1901.10861}, 2019.

\bibitem[Shamir et~al.(2021)Shamir, Melamed, and BenShmuel]{shamir2021dimpled}
A.~Shamir, O.~Melamed, and O.~BenShmuel.
\newblock The dimpled manifold model of adversarial examples in machine
  learning.
\newblock \emph{arXiv preprint arXiv:2106.10151}, 2021.

\bibitem[Szegedy et~al.(2013)Szegedy, Zaremba, Sutskever, Bruna, Erhan,
  Goodfellow, and Fergus]{szegedy2013intriguing}
C.~Szegedy, W.~Zaremba, I.~Sutskever, J.~Bruna, D.~Erhan, I.~Goodfellow, and
  R.~Fergus.
\newblock Intriguing properties of neural networks.
\newblock \emph{arXiv preprint arXiv:1312.6199}, 2013.

\bibitem[Vardi(2022)]{vardi2022implicit}
G.~Vardi.
\newblock On the implicit bias in deep-learning algorithms.
\newblock \emph{arXiv preprint arXiv:2208.12591}, 2022.

\bibitem[Vardi et~al.(2021)Vardi, Shamir, and Srebro]{vardi2021margin}
G.~Vardi, O.~Shamir, and N.~Srebro.
\newblock On margin maximization in linear and relu networks.
\newblock \emph{arXiv preprint arXiv:2110.02732}, 2021.

\bibitem[Wang et~al.(2020)Wang, Wu, Huang, and Xing]{wang2020high}
H.~Wang, X.~Wu, Z.~Huang, and E.~P. Xing.
\newblock High-frequency component helps explain the generalization of
  convolutional neural networks.
\newblock In \emph{Proceedings of the IEEE/CVF Conference on Computer Vision
  and Pattern Recognition}, pages 8684--8694, 2020.

\bibitem[Wong and Kolter(2018)]{wong2018provable}
E.~Wong and Z.~Kolter.
\newblock Provable defenses against adversarial examples via the convex outer
  adversarial polytope.
\newblock In \emph{International Conference on Machine Learning}, pages
  5286--5295. PMLR, 2018.

\bibitem[Wu et~al.(2020)Wu, Lim, Davis, and Goldstein]{wu2020making}
Z.~Wu, S.-N. Lim, L.~S. Davis, and T.~Goldstein.
\newblock Making an invisibility cloak: Real world adversarial attacks on
  object detectors.
\newblock In \emph{European Conference on Computer Vision}, pages 1--17.
  Springer, 2020.

\bibitem[Zhang et~al.(2021)Zhang, Benz, Lin, Karjauv, Wu, and
  Kweon]{zhang2021survey}
C.~Zhang, P.~Benz, C.~Lin, A.~Karjauv, J.~Wu, and I.~S. Kweon.
\newblock A survey on universal adversarial attack.
\newblock \emph{arXiv preprint arXiv:2103.01498}, 2021.

\end{thebibliography}

\appendix

\section{Preliminaries on the KKT conditions}
\label{app:KKT}

Below we review the definition of the KKT conditions for non-smooth optimization problems (cf. \cite{lyu2019gradient,dutta2013approximate}).

Let $f: \reals^d \to \reals$ be a locally Lipschitz function. The Clarke subdifferential \citep{clarke2008nonsmooth} at $\bx \in \reals^d$ is the convex set
\[
	\partial^\circ f(\bx) := \text{conv} \left\{ \lim_{i \to \infty} \nabla f(\bx_i) \; \middle| \; \lim_{i \to \infty} \bx_i = \bx,\; f \text{ is differentiable at } \bx_i  \right\}~.
\]
If $f$ is continuously differentiable at $\bx$ then $\partial^\circ f(\bx) = \{\nabla f(\bx) \}$.
For the Clarke subdifferential the chain rule holds as an inclusion rather than an equation.
That is, for locally Lipschitz functions $z_1,\ldots,z_n:\reals^d \to \reals$ and $f:\reals^n \to \reals$, we have
\[
	\partial^\circ(f \circ \bz)(\bx) \subseteq \text{conv}\left\{ \sum_{i=1}^n \alpha_i \bh_i: \balpha \in \partial^\circ f(z_1(\bx),\ldots,z_n(\bx)), \bh_i \in \partial^\circ z_i(\bx) \right\}~.
\]

Consider the following optimization problem
\begin{equation}
\label{eq:KKT nonsmooth def}
	\min f(\bx) \;\;\;\; \text{s.t. } \;\;\; \forall n \in [N] \;\; g_n(\bx) \leq 0~,
\end{equation}
where $f,g_1,\ldots,g_n : \reals^d \to \reals$ are locally Lipschitz functions. We say that $\bx \in \reals^d$ is a \emph{feasible point} of Problem~(\ref{eq:KKT nonsmooth def}) if $\bx$ satisfies $g_n(\bx) \leq 0$ for all $n \in [N]$. We say that a feasible point $\bx$ is a \emph{KKT point} if there exists $\lambda_1,\ldots,\lambda_N \geq 0$ such that 
\begin{enumerate}
	\item $\zero \in \partial^\circ f(\bx) + \sum_{n \in [N]} \lambda_n \partial^\circ g_n(\bx)$;
	\item For all $n \in [N]$ we have $\lambda_n g_n(\bx) = 0$.
\end{enumerate}

\section{Proof of Lemma~\ref{lem:random are orthogonal}}
\label{app:proof of random are orthogonal}

Let $\bx, \bx' \sim \cu(\bbs^{d-1})$ be i.i.d. random variables. 
Since $\bx$ and $\bx'$ are independent and uniformly distributed on the sphere, then the distribution of $\bx^\top\bx'$ equals to the distribution of $\inner{\bx, (1,0,\ldots,0)}$ (i.e., we can assume w.l.o.g. that $\bx' = (1,0,\ldots,0)^\top$), which equals to the
marginal distribution of the first component of $\bx$. Let $z$ be the first component of $\bx$. By standard results (cf. \cite{fang2018symmetric}), the distribution of $z^2$ is $\betadist(\frac{1}{2},\frac{d-1}{2})$, namely, a Beta distribution with parameters $\frac{1}{2},\frac{d-1}{2}$. Thus, the density of $z^2$ is
\[
f_{z^2}(y) = \frac{1}{B\left(\frac{1}{2},\frac{d-1}{2}\right)} y^{-\frac{1}{2}} (1-y)^{\frac{d-3}{2}}~,
\]
where $B(\alpha,\beta) = \frac{\Gamma(\alpha)\Gamma(\beta)}{\Gamma(\alpha+\beta)}$ is the Beta function, and $y \in (0,1)$.
Performing a variable change, we obtain the density of $|z|$, which equals to the density of $|\bx^\top\bx'|$.
\begin{equation}
\label{eq:density with B}
	f_{|\bx^\top\bx'|}(y) 
	= f_{|z|}(y)
	= f_{z^2}(y^2) \cdot 2y
	= \frac{1}{B\left(\frac{1}{2},\frac{d-1}{2}\right)} y^{-1} (1-y^2)^{\frac{d-3}{2}} \cdot 2y
	= \frac{2}{B\left(\frac{1}{2},\frac{d-1}{2}\right)} (1-y^2)^{\frac{d-3}{2}}~,
\end{equation}
where $y \in (0,1)$. Note that
\begin{equation*}
\label{eq:bound B}
	B\left(\frac{1}{2},\frac{d-1}{2}\right)
	= \frac{\Gamma(\frac{1}{2})\Gamma(\frac{d-1}{2})}{\Gamma(\frac{d}{2})}
	\geq \frac{\Gamma(\frac{1}{2})\Gamma(\frac{d}{2}-1)}{\Gamma(\frac{d}{2})}
	= \frac{\Gamma(\frac{1}{2})}{\frac{d}{2}-1}
	\geq \frac{2\Gamma(\frac{1}{2})}{d}
	= \frac{2\sqrt{\pi}}{d}
	\geq \frac{2}{d}~.
\end{equation*}
Combining the above with \eqref{eq:density with B}, we obtain
\[
	f_{|\bx^\top\bx'|}(y) 
	\leq d (1-y^2)^{\frac{d-3}{2}}~.
\]

Therefore, for every $\frac{1}{2} \leq y < 1$ we have $f_{|\bx^\top\bx'|}(y) \leq  d \left(\frac{3}{4}\right)^{\frac{d-3}{2}}$. 
Hence, we conclude that $\Pr\left[ |\inner{\bx,\bx'}| > \frac{1}{2} \right] \leq  d \left(\frac{3}{4}\right)^{\frac{d-3}{2}} \cdot 1$. 
By the union bound, the probability that there are $i \neq j$ such that $|\inner{\bx_i,\bx_j}|  > \frac{1}{2}$ is at most 
\[
    m^2 \cdot d \cdot \left(\frac{3}{4}\right)^{\frac{d-3}{2}}
    \leq d^{2k+1} \left(\frac{3}{4}\right)^{\frac{d-3}{2}}
    = o_d(1)~.
\]

Moreover, for every $\frac{\log(d)}{\sqrt{d}} \leq y < 1$ we have (for $d \geq 6$)
\begin{align*}
	f_{|\bx^\top\bx'|}(y) 
	&\leq d (1-y^2)^{\frac{d-3}{2}}
	\leq d \exp \left(-y^2 \cdot \frac{d-3}{2}\right)
	\leq d \exp \left(-\frac{\log^2(d)}{d} \cdot \frac{d-3}{2}\right)
	\\
	&\leq d \exp \left(-\frac{\ln^2(d)}{d} \cdot \frac{d}{4}\right)
	= d \cdot d^{-\ln(d)/4}~.
\end{align*}
Hence, $\Pr\left[ |\inner{\bx,\bx'}| > \frac{\log(d)}{\sqrt{d}} \right] \leq  d \cdot d^{-\ln(d)/4} \cdot 1$. 
By the union bound, the probability that there are $i \neq j$ such that $|\inner{\bx_i,\bx_j}|  > \frac{\log(d)}{\sqrt{d}}$ is at most 
\[
    m^2 \cdot d \cdot d^{-\ln(d)/4}
    \leq d^{2k+1-\ln(d)/4}
    = o_d(1)~.
\]

\section{Proof of Theorem~\ref{thm:non-robust}}
\label{app:proof of non-robust}

We start with some required definitions. Some of the definitions are also given in Section~\ref{sec:idea} and we repeat them here for convenience.
We denote $\cn_\btheta(\bx) = \sum_{j \in [k]} v_j \sigma(\bw_j^\top \bx + b_j)$. Thus, $\cn_\btheta$ is a network of width $k$, where the weights in the first layer are $\bw_1,\ldots,\bw_k$, the bias terms are $b_1,\ldots,b_k$, and the weights in the second layer are $v_1,\ldots,v_k$. We denote $J := [k]$, $J^+ := \{j \in J: v_j \geq 0\}$, and $J^- := \{j \in J: v_j < 0\}$. Note that since the dataset contains both examples with label $1$ and examples with label $-1$ then $J^+$ and $J^-$ are non-empty.
We also denote $p := \max_{i \neq j}|\inner{\bx_i,\bx_j}|$.
Since $m \leq \frac{d+1}{3(p+1)}$, we let $c' \leq \frac{1}{3}$ be such that $m = c' \cdot \frac{d+1}{p+1}$. 
Since $\btheta$ satisfies the KKT conditions of Problem~(\ref{eq:optimization problem}), then there are $\lambda_1,\ldots,\lambda_m$ such that for every $j \in J$ we have
\begin{equation}
\label{eq:kkt condition w}
	\bw_j = \sum_{i \in I} \lambda_i \nabla_{\bw_j} \left( y_i \cn_{\btheta}(\bx_i) \right) =  \sum_{i \in I} \lambda_i y_i v_j \sigma'_{i,j} \bx_i~,
\end{equation}
where $\sigma'_{i,j}$ is a subgradient  of $\sigma$ at $\bw_j^\top \bx_i + b_j$, i.e., if $\bw_j^\top \bx_i + b_j \neq 0$ then $\sigma'_{i,j} = \sign(\bw_j^\top \bx_i + b_j)$, and otherwise $\sigma'_{i,j}$ is some value in $[0,1]$. Also we have $\lambda_i \geq 0$ for all $i$, and $\lambda_i=0$ if 
$y_i  \cn_{\btheta}(\bx_i) 
\neq 1$. Likewise, we have
\stam{
\begin{equation}
\label{eq:kkt condition v}
	v_j = \sum_{i \in I} \lambda_i \nabla_{v_j} \left( y_i \cn_{\btheta}(\bx_i) \right) =  \sum_{i \in I} \lambda_i y_i \sigma(\bw_j^\top \bx_i + b_j)~.
\end{equation}
}
\begin{equation}
\label{eq:kkt condition b}
	b_j = \sum_{i \in I} \lambda_i \nabla_{b_j} \left( y_i \cn_{\btheta}(\bx_i) \right) =  \sum_{i \in I} \lambda_i y_i v_j \sigma'_{i,j}~.
\end{equation}

\begin{lemma}
\label{lem:I is all}
	For all $i \in I$ we have $y_i \cn_\btheta(\bx_i) = 1$.
\end{lemma}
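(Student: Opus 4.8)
The plan is to argue by contradiction: suppose some example attains margin strictly greater than $1$, so there is $i \in I$ with $y_i \cn_\btheta(\bx_i) > 1$, and hence $\lambda_i = 0$ by complementary slackness. By the symmetry between the two labels, assume $i \in I^+$. The first step is to plug the stationarity conditions into this margin inequality. Since $\sigma \ge 0$, the negative-weight neurons only reduce the output, so
\[
1 < y_i\cn_\btheta(\bx_i) \le \sum_{j \in J^+} v_j\,\sigma(\bw_j^\top \bx_i + b_j) \le \sum_{j \in J^+} |v_j|\,\bigl|\bw_j^\top \bx_i + b_j\bigr|~.
\]
Using \eqref{eq:kkt condition w} and \eqref{eq:kkt condition b}, $\bw_j^\top\bx_i + b_j = v_j\sum_{l\in I}\lambda_l y_l \sigma'_{l,j}(\inner{\bx_l,\bx_i}+1)$; the $l=i$ term would dominate (it carries the factor $\inner{\bx_i,\bx_i}+1 = d+1$) but it vanishes since $\lambda_i=0$, while every remaining term satisfies $|\inner{\bx_l,\bx_i}+1|\le p+1$. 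Summing over $j\in J^+$, swapping the order of summation, and using $|I|=m$ together with $m \le \frac{d+1}{3(p+1)}$, one obtains a statement of the form
\[
\max_{l\in I}\ \sum_{j\in J^+} v_j^2\,\lambda_l\,\sigma'_{l,j} \;>\; \frac{1}{m(p+1)} \;\ge\; \frac{3}{d+1}~.
\]
In words, \emph{some} data point $\bx_r$ carries a large total coefficient mass across the positive-weight neurons — recall that for $j\in J^+$ the coefficient of $y_l\bx_l$ in $\bw_j$ is $\lambda_l v_j\sigma'_{l,j}$, so this quantity measures exactly that mass.

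The second step is to show that such a maximizer $r$ can be taken in $I^+$, and that this is impossible. If a maximizer lies in $I^-$, the sign clash ($y_r=-1$ while $v_j\ge 0$ for $j\in J^+$) forces the pre-activations $\bw_j^\top\bx_r+b_j$ of the positive neurons to be non-positive up to the $O(p)$ cross terms; re-running the bound of the first step at $\bx_r$ then exhibits another index, necessarily in $I^+$, whose coefficient mass is at least as large, so we may assume $r\in I^+$. Finally, take $r \in I^+$ with $\sum_{j\in J^+}v_j^2\lambda_r\sigma'_{r,j} > \frac{3}{d+1}$ and re-evaluate the network at $\bx_r$: in $\sum_{j\in J^+}v_j\sigma(\bw_j^\top\bx_r+b_j)$ the term $l=r$ now contributes $\inner{\bx_r,\bx_r}+1 = d+1$ times the large mass, which beats the $O(mp)$-sized cross terms (again using $m \le \frac{d+1}{3(p+1)}$) and the non-positive contribution of the $J^-$ neurons, yielding $\cn_\btheta(\bx_r) > 1$. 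But then $\lambda_r=0$, so $\sum_{j\in J^+}v_j^2\lambda_r\sigma'_{r,j}=0$, contradicting the lower bound. The $i\in I^-$ case is identical after flipping all labels.

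The main obstacle I expect is this middle step: the case analysis pinning down the location of the maximizer, and, in the last step, lower-bounding the contribution of the negative-weight neurons $J^-$ at $\bx_r$. The whole argument has only a constant factor (about $3$) of slack, coming from $m \le \frac{d+1}{3(p+1)}$, so every inequality has to be tracked tightly; moreover the $v_j$'s cannot be normalized to $\pm 1$ as in the informal sketch, so the coefficient-mass quantities must be carried with $v_j^2$ weights throughout, and one must check that the feasibility constraints $y_l\cn_\btheta(\bx_l)\ge 1$ together with the structure $\sigma'_{l,j}=\sign(\bw_j^\top\bx_l+b_j)$ (so $\sigma'_{l,j}=0$ whenever the pre-activation is negative) suffice to control the stray $J^-$ terms. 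I would expect the formal version to be organized exactly as the corresponding lemma (\lemref{lem:I is all}) is stated, with these estimates done carefully.
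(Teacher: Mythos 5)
Your overall route matches the paper's: argue by contradiction, use KKT stationarity and complementary slackness to extract a lower bound on the largest ``coefficient mass'' $\max_{l\in I}\sum_{j\in J^+}v_j^2\lambda_l\sigma'_{l,j}$, let $r$ be a maximizer, split on whether $r\in I^+$ or $r\in I^-$, and derive a contradiction in each case. Step 1 is exactly right, as is the final contradiction for $r\in I^+$.

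There are two places where the plan as written would not go through as sketched, and both are precisely the points you flagged as ``the main obstacle.'' First, the $r\in I^-$ case: you propose ``re-running the bound of the first step at $\bx_r$'' to exhibit another maximizer in $I^+$, but the step-1 bound crucially used $\lambda_i=0$ to drop the dominant $l=i$ term; at $\bx_r$ one has $\lambda_r>0$, so the $l=r$ term carrying the $(d+1)$ factor cannot be discarded and the same bound does not re-run. What the paper actually uses is the observation that, for $j\in J^+$, $\sigma'_{r,j}\neq 0$ forces $\bw_j^\top\bx_r+b_j\geq 0$; combined with $y_r=-1$, $v_j\geq 0$, this makes the $l=r$ term strictly negative in the expansion of $\bw_j^\top\bx_r+b_j$, and a short computation gives $\sum_{j\in J^+}v_j^2\lambda_r\sigma'_{r,j}\leq c'\cdot\alpha<\alpha$ (using $m\leq\frac{d+1}{3(p+1)}$), directly contradicting that $r$ is the argmax. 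Second, in the $r\in I^+$ case you treat the $J^-$ contribution as just ``non-positive and beatable,'' but since you are lower-bounding $\cn_\btheta(\bx_r)$ you cannot simply discard a non-positive term; you must lower-bound it, and the paper does so by $-(p+1)m\alpha$, which requires that $\alpha$ also dominates $\max_{l}\sum_{j\in J^-}v_j^2\lambda_l\sigma'_{l,j}$. Your $\alpha$ is defined over $J^+$ only, so this bound is not available; the paper resolves this by defining both $\alpha^+$ and $\alpha^-$, noting one of them exceeds $\frac{1}{m(p+1)}$ regardless of the label of the violating $i$, and then taking the larger one w.l.o.g. so that $\alpha\geq\alpha^-$ as well. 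Without that w.l.o.g. step, the $J^-$ terms at $\bx_r$ are uncontrolled.
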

\begin{proof}
	Assume that there is $i \in I$ such that $y_i \cn_\btheta(\bx_i) > 1$. Hence, $\lambda_i = 0$.
	If $i \in I^+$ , then we have
	\[
		1 
		< y_i \cn_\btheta(\bx_i)
		= 1 \cdot  \sum_{j \in J} v_j \sigma(\bw_j^\top \bx_i + b_j)
		\leq \sum_{j \in J^+} v_j \sigma(\bw_j^\top \bx_i + b_j)
		\leq \sum_{j \in J^+} v_j \left| \bw_j^\top \bx_i + b_j \right|~.
	\]
	By \eqref{eq:kkt condition w} and~(\ref{eq:kkt condition b}) the above equals
	\begin{align*}
		\sum_{j \in J^+} v_j \left|  \sum_{l \in I} \lambda_l y_l v_j \sigma'_{l,j} \bx_l^\top \bx_i +  \sum_{l \in I} \lambda_l y_l v_j \sigma'_{l,j} \right|
		&\leq \sum_{j \in J^+} v_j \sum_{l \in I \setminus \{i\}} \left|  \lambda_l y_l v_j \sigma'_{l,j} ( \bx_l^\top \bx_i  + 1 )  \right| 
		\\
		&\leq \sum_{j \in J^+} v_j \sum_{l \in I \setminus \{i\}} \lambda_l | y_l v_j | \sigma'_{l,j} ( p + 1 )  
		\\
		&= \sum_{j \in J^+}   \sum_{l \in I \setminus \{i\}}  v_j^2 \lambda_l \sigma'_{l,j} ( p + 1 )
		\\
		&= ( p + 1 ) \sum_{l \in I \setminus \{i\}} \sum_{j \in J^+}  v_j^2 \lambda_l \sigma'_{l,j} 
		\\
		&\leq (p + 1 ) \cdot |I| \cdot \max_{l \in I} \left( \sum_{j \in J^+}  v_j^2 \lambda_l \sigma'_{l,j} \right)~,
	\end{align*}
	where the first inequality uses $\lambda_i=0$.
	Therefore, we have 
	\[
		\alpha^+ := \max_{l \in I} \left( \sum_{j \in J^+}  v_j^2 \lambda_l \sigma'_{l,j} \right) > \frac{1}{m(p+1)}~.
	\]	
	From similar arguments, if $i \in I^-$, then we have
	\[
		\alpha^- := \max_{l \in I} \left( \sum_{j \in J^-}  v_j^2 \lambda_l \sigma'_{l,j} \right) > \frac{1}{m(p+1)}~.
	\]
	Thus, we must have $\max\{\alpha^+,\alpha^-\} > \frac{1}{m(p+1)}$.
	Assume w.l.o.g. that $\alpha^+ \geq \alpha^-$ (the proof for the case $\alpha^+ < \alpha^-$ is similar).
	Let $\alpha := \alpha^+$ and $r := \argmax_{l \in I} \left( \sum_{j \in J^+}  v_j^2 \lambda_l \sigma'_{l,j} \right)$.
	Thus, for every $l \in I$ we have $\alpha \geq \sum_{j \in J^+}  v_j^2 \lambda_l \sigma'_{l,j}$,  $\alpha \geq \sum_{j \in J^-}  v_j^2 \lambda_l \sigma'_{l,j}$, and we have $\alpha >  \frac{1}{m(p+1)}$.
	Moreover, we have $\lambda_r > 0$, since otherwise $\alpha=0$ in contradiction to $\alpha > \frac{1}{m(p+1)}>0$. Hence, $y_r \cn_\btheta (\bx_r) = 1$. 
	
	By \eqref{eq:kkt condition w} and~(\ref{eq:kkt condition b}) we have
	\begin{align}
	\label{eq:r in I minus}
		\bw_j^\top \bx_r + b_j 
		&= \sum_{i \in I} \lambda_i y_i v_j \sigma'_{i,j} \bx_i^\top \bx_r +  \sum_{i \in I} \lambda_i y_i v_j \sigma'_{i,j} \nonumber
		\\
		&= \sum_{i \in I} \lambda_i y_i v_j \sigma'_{i,j} (\bx_i^\top \bx_r + 1) \nonumber
		\\
		&= \left(\sum_{i\in I,~i \neq r} \lambda_i y_i v_j \sigma'_{i,j} (\bx_i^\top \bx_r + 1) \right) +  \lambda_r y_r v_j \sigma'_{r,j} (\bx_r^\top \bx_r + 1)~. 
	\end{align}
	
	We consider two cases:
	
	{\bf Case 1: }
	Assume that $r \in I^-$. Let $j \in J^+$.
	Note that by the definition of $\sigma'_{r,j}$, if 
	$\sigma'_{r,j} \neq 0$ then $\bw_j^\top \bx_r + b_j \geq 0$. 
	Hence, if $\sigma'_{r,j} \neq 0$ then by \eqref{eq:r in I minus} we have
	\begin{align*}
		0 
		&\leq \bw_j^\top \bx_r + b_j 
		\\
		&= \left(\sum_{i\in I,~i\neq r} \lambda_i y_i v_j \sigma'_{i,j} (\bx_i^\top \bx_r + 1) \right) +  \lambda_r y_r v_j \sigma'_{r,j} (\bx_r^\top \bx_r + 1) 
		\\
		&\leq \left(\sum_{i\in I,~i\neq r} \lambda_i v_j \sigma'_{i,j} (p + 1) \right) -  \lambda_r v_j \sigma'_{r,j} (d + 1)~. \nonumber
	\end{align*}
	Thus
	\[
		\lambda_r v_j \sigma'_{r,j} (d + 1) \leq \sum_{i\in I,~i\neq r} \lambda_i v_j \sigma'_{i,j} (p + 1)~.
	\]
	Since the above holds for all $j \in J^+$ then 
	\begin{align*}
		\sum_{j \in J^+}  v_j^2 \lambda_r \sigma'_{r,j}	
		&\leq \sum_{j \in J^+}  v_j \cdot  \frac{1}{d+1} \cdot  \sum_{i\in I,~i\neq r} \lambda_i v_j \sigma'_{i,j} (p + 1)
		\\
		&=  \frac{p + 1}{d+1} \cdot \sum_{i\in I,~i\neq r} \sum_{j \in J^+} v_j^2 \lambda_i \sigma'_{i,j} 
		\\
		&\leq \frac{p + 1}{d+1} \cdot m \cdot  \max_{i \in I} \left(\sum_{j \in J^+}  v_j^2 \lambda_i \sigma'_{i,j} \right)
		\\
		&= \frac{p + 1}{d+1} \cdot \frac{c'(d+1)}{p+1} \cdot  \max_{i \in I} \left(\sum_{j \in J^+}  v_j^2 \lambda_i \sigma'_{i,j} \right)
		\\
		&\leq \frac{1}{3} \cdot \max_{i \in I} \left(\sum_{j \in J^+}  v_j^2 \lambda_i \sigma'_{i,j} \right)~, 
	\end{align*}
	in contradiction to the choice of $r$.
	
	{\bf Case 2: }
	Assume that $r \in I^+$. We have
	\begin{align}
	\label{eq:all 1 main cont}
		1 
		&= y_r \cn_\btheta (\bx_r)
		= 1 \cdot \sum_{j \in J} v_j \sigma(\bw_j^\top \bx_r + b_j) \nonumber
		\\
		&\geq  \sum_{j \in J^+} v_j \cdot (\bw_j^\top \bx_r + b_j) + \sum_{j \in J^-} v_j \sigma( \bw_j^\top \bx_r + b_j )~.
	\end{align}
	Note that by \eqref{eq:r in I minus} we have
	\begin{align}
	\label{eq:all 1 main cont a}
		 \sum_{j \in J^+} v_j \cdot (\bw_j^\top \bx_r + b_j) 
		 &=  \sum_{j \in J^+} \left[  \left(\sum_{i\in I,~i\neq r} \lambda_i y_i v_j^2 \sigma'_{i,j} (\bx_i^\top \bx_r + 1) \right) +  \lambda_r y_r v_j^2 \sigma'_{r,j} (\bx_r^\top \bx_r + 1)  \right] \nonumber
		 \\
		 &\geq \sum_{j \in J^+} \left[  \left(- \sum_{i\in I,~i\neq r} \lambda_i v_j^2 \sigma'_{i,j} (p + 1) \right) +  \lambda_r v_j^2 \sigma'_{r,j} (d + 1)  \right] \nonumber
		 \\
		 &= \left( - (p + 1) \sum_{i\in I,~i\neq r}  \sum_{j \in J^+} \lambda_i v_j^2 \sigma'_{i,j}  \right) +   \sum_{j \in J^+} \lambda_r v_j^2 \sigma'_{r,j} (d + 1) \nonumber
		 \\
		 &\geq - (p + 1)m \alpha +  (d + 1) \alpha~. 
	\end{align}
	Moreover, using \eqref{eq:r in I minus} again we have
	\begin{align}
	\label{eq:all 1 main cont b}
		 \sum_{j \in J^-} v_j \sigma \left( \bw_j^\top \bx_r + b_j \right)
		 &=  \sum_{j \in J^-}  v_j \sigma \left( \left(\sum_{i\in I,~i\neq r} \lambda_i y_i v_j \sigma'_{i,j} (\bx_i^\top \bx_r + 1) \right) +  \lambda_r y_r v_j \sigma'_{r,j} (\bx_r^\top \bx_r + 1)  \right) \nonumber
		 \\
		 &\geq \sum_{j \in J^-}  v_j \sigma \left(\sum_{i\in I,~i\neq r} \lambda_i y_i v_j \sigma'_{i,j} (\bx_i^\top \bx_r + 1) \right)  \nonumber
		 \\
		 &\geq \sum_{j \in J^-}  v_j \left(\sum_{i\in I,~i\neq r} \lambda_i |v_j| \sigma'_{i,j} (p + 1)  \right) \nonumber
		 \\
		 &= - (p + 1) \left(\sum_{i\in I,~i\neq r}  \sum_{j \in J^-} \lambda_i v_j^2 \sigma'_{i,j}  \right)   \nonumber
		 \\
		 &\geq  - (p + 1)m \alpha~.
	\end{align}
	Combining \eqref{eq:all 1 main cont},~(\ref{eq:all 1 main cont a}) and~(\ref{eq:all 1 main cont b}), we obtain
	\begin{align*}
		1 
		&= y_r \cn_\btheta (\bx_r)
		\\
		&\geq  - (p + 1)m \alpha +  (d + 1) \alpha - (p + 1)m \alpha
		\\
		&= \alpha \left(d+1 - 2(p+1)m \right)
		\\
		&= \alpha \left(d+1 - 2(p+1) \cdot \frac{c'(d+1)}{p+1}  \right)
		\\
		&= \alpha (d+1)(1-2c') 		
		\\
		&> \frac{1}{m(p+1)} \cdot (d+1)(1-2c') 	
		\\
		&= \frac{p+1}{c'(d+1)(p+1)} \cdot (d+1)(1-2c') 	
		\\
		&= \frac{1-2c'}{c'} 
		\geq 1~.
	\end{align*}
	Thus, we reach a contradiction.	
\end{proof}

\begin{lemma}
\label{lem:large lambda sum}
	We have
	\[
		\sum_{i \in I}  \sum_{j \in J^+}  v_j^2 \lambda_i \sigma'_{i,j} \geq \frac{mc}{(cc'+1)(d+1)}~, 
	\]
	and
	\[
		\sum_{i \in I}  \sum_{j \in J^-}  v_j^2 \lambda_i \sigma'_{i,j} \geq \frac{mc}{(cc'+1)(d+1)}~. 
	\]
\end{lemma}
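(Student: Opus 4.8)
The plan is to prove the bound for $J^+$; the bound for $J^-$ then follows by the symmetric argument, using the examples in $I^-$ and the neurons in $J^-$ in place of $I^+$ and $J^+$. Fix $i \in I^+$. By Lemma~\ref{lem:I is all} we have $\cn_\btheta(\bx_i) = y_i = 1$, and since every $j \in J^-$ has $v_j < 0$ so that $v_j\sigma(\bw_j^\top\bx_i+b_j) \le 0$, we get $1 \le \sum_{j\in J^+} v_j\sigma(\bw_j^\top\bx_i+b_j)$. The main computation is to expand the right-hand side: using $\sigma(\bw_j^\top\bx_i+b_j) = \sigma'_{i,j}(\bw_j^\top\bx_i+b_j)$ together with the KKT identities \eqref{eq:kkt condition w} and~\eqref{eq:kkt condition b}, which give $\bw_j^\top\bx_i+b_j = v_j\sum_{l\in I}\lambda_l y_l\sigma'_{l,j}(\bx_l^\top\bx_i+1)$, I would isolate the diagonal term $l=i$ (where $y_i=1$ and $\bx_i^\top\bx_i = d$), bound $\sigma'_{i,j}\le 1$ in the leftover factors, and bound $|y_l(\bx_l^\top\bx_i+1)| \le p+1$ on the off-diagonal terms. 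Summing over $j\in J^+$ and writing $A_l := \sum_{j\in J^+} v_j^2\lambda_l\sigma'_{l,j} \ge 0$, this yields, for every $i\in I^+$,
\[
	1 \ \le\ (d+1)\,A_i + (p+1)\sum_{l\in I\setminus\{i\}} A_l~.
\]

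Writing $S := \sum_{l\in I} A_l$ (the quantity to be lower bounded), this rearranges to $(d-p)\,A_i \ge 1 - (p+1)S$ for each $i\in I^+$. Summing over $i\in I^+$ and using $S \ge \sum_{i\in I^+}A_i$ gives $(d-p)\,S \ge |I^+|\bigl(1-(p+1)S\bigr)$ — an inequality that holds regardless of the sign of $1-(p+1)S$, so no case split is needed — and hence $S \ge \frac{|I^+|}{(d-p)+(p+1)|I^+|}$. Since $x\mapsto \frac{x}{(d-p)+(p+1)x}$ is increasing on $[0,\infty)$ (here $d-p>0$ follows from $m\ge 1$ and the assumption on $m$), and $|I^+|\ge cm$, and $(p+1)m = c'(d+1)$ by the definition of $c'$, the denominator is at most $(d-p)+(p+1)cm \le (d+1)+cc'(d+1) = (cc'+1)(d+1)$, which gives exactly $S \ge \frac{cm}{(cc'+1)(d+1)}$.

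I expect the only delicate point to be the diagonal/off-diagonal split leading to the first display: one must keep a single factor $\sigma'_{i,j}$ in the diagonal term (rather than the two that appear naturally from $\sigma(t)=\sigma'_{i,j}t$ and the KKT identity) so that the resulting constant comes out as $(cc'+1)$ rather than involving $\sum_{j\in J^+}v_j^2$, which we do not control. The concluding monotonicity step is needed precisely because $|I^+|$ may be strictly larger than $cm$. Everything else is bookkeeping with the triangle inequality, the bound $p = \max_{i\ne j}|\inner{\bx_i,\bx_j}|$, and the nonnegativity of $v_j^2$, $\lambda_l$, and $\sigma'_{l,j}$.
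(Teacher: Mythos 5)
Your proof is correct. Up to the per-example inequality
\[
1 \ \le\ (d+1)\sum_{j\in J^+}v_j^2\lambda_i\sigma'_{i,j} + (p+1)\sum_{l\ne i}\sum_{j\in J^+}v_j^2\lambda_l\sigma'_{l,j}\qquad (i\in I^+)
\]
you follow essentially the same path as the paper (you derive it a little more directly via $\sigma(z)=\sigma'(z)\,z$ and the relaxation $(\sigma'_{i,j})^2\le\sigma'_{i,j}$, whereas the paper first introduces and then discharges an outer ReLU via $\sigma(t)\geq t$ and $\sigma(t)\le|t|$; same bound in the end). The finishing step, however, is genuinely different. The paper fixes the threshold $\gamma=\tfrac{1}{cc'+1}$ and argues by dichotomy: either some $i$ gives the off-diagonal part at least $1-\gamma$ (done in one step), or else every $i\in I^+$ gives the diagonal part at least $\gamma$, in which case one sums over $I^+$. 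You instead rewrite the per-example inequality as $(d-p)A_i\ge 1-(p+1)S$, sum over $i\in I^+$, use $\sum_{i\in I^+}A_i\le S$, and solve for $S\ge \frac{|I^+|}{(d-p)+(p+1)|I^+|}$, then pass to $|I^+|\ge cm$ via monotonicity of $x\mapsto x/((d-p)+(p+1)x)$. Your version avoids the case split (the paper's threshold $\gamma$ is exactly what makes the two cases agree on the final constant; your linear rearrangement extracts that constant automatically) and in fact yields the marginally sharper intermediate bound $S\ge|I^+|/((d-p)+(p+1)|I^+|)$ before relaxing $d-p\le d+1$ and $|I^+|\to cm$. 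Both arguments need $d-p>0$, which you correctly note follows from $d-mp>0$ with $m\ge 1$.
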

\begin{proof}
	We prove here the first claim. The proof of the second claim is similar.
	For every $i \in I^+$ we have 
	\[
		1
		\leq \cn_\btheta(\bx_i)
		\leq \sum_{j \in J^+} v_j \sigma(\bw_j^\top \bx_i + b_j)~.
	\]
	By plugging-in \eqref{eq:kkt condition w} and~(\ref{eq:kkt condition b}), the above equals to
	\begin{align*}
		\sum_{j \in J^+} v_j \sigma &\left(\sum_{l \in I} \lambda_l y_l v_j \sigma'_{l,j} \bx_l^\top \bx_i +  \sum_{l \in I} \lambda_l y_l v_j \sigma'_{l,j} \right)
		\\
		&=\sum_{j \in J^+} v_j \sigma \left( \left(\sum_{l \in I,~l \neq i} \lambda_l y_l v_j \sigma'_{l,j} (\bx_l^\top \bx_i + 1) \right) +  \lambda_i y_i v_j \sigma'_{i,j} (\bx_i^\top \bx_i + 1)  \right) 
		\\
		&\leq \sum_{j \in J^+} v_j \sigma \left( \left(\sum_{l \in I,~l \neq i} \sigma \left(\lambda_l y_l v_j \sigma'_{l,j} (\bx_l^\top \bx_i + 1) \right) \right) + \sigma \left( \lambda_i y_i v_j \sigma'_{i,j} (\bx_i^\top \bx_i + 1) \right)  \right) 
		\\
		&= \sum_{j \in J^+} v_j \left( \left(\sum_{l \in I,~l \neq i} \sigma \left( \lambda_l y_l v_j \sigma'_{l,j} (\bx_l^\top \bx_i + 1) \right) \right) +  \lambda_i v_j \sigma'_{i,j} (d + 1)  \right) 
		\\
		&\leq \sum_{j \in J^+} v_j \left( \left(\sum_{l \in I,~l \neq i} \lambda_l v_j \sigma'_{l,j} (p + 1)  \right) +  \lambda_i v_j \sigma'_{i,j} (d + 1)  \right) 
		\\
		&=\left( \sum_{l \in I,~l \neq i} \sum_{j \in J^+} \lambda_l v_j^2 \sigma'_{l,j} (p + 1) \right) +  \sum_{j \in J^+}  \lambda_i v_j^2 \sigma'_{i,j} (d + 1)~.
	\end{align*}
	
	Let $\gamma = \frac{1}{cc'+1}$. By the above equation, for every $i \in I^+$ we either have 
	\begin{equation}
	\label{eq:large lambda sum part 1}
		(p + 1) \sum_{l \in I,~l \neq i} \sum_{j \in J^+} \lambda_l v_j^2 \sigma'_{l,j} \geq 1-\gamma~, 
	\end{equation}
	or
	\begin{equation}
	\label{eq:large lambda sum part 2}
		 (d + 1) \sum_{j \in J^+}  \lambda_i v_j^2 \sigma'_{i,j} \geq \gamma~.
	\end{equation}
	
	If there exists $i \in I^+$ such that \eqref{eq:large lambda sum part 1} holds, then we have
	\begin{align*}
		(p + 1) \sum_{l \in I} \sum_{j \in J^+} \lambda_l v_j^2 \sigma'_{l,j}
		\geq (p + 1) \sum_{l \in I,~l \neq i} \sum_{j \in J^+} \lambda_l v_j^2 \sigma'_{l,j}
		\geq 1- \gamma
		=  \frac{cc'}{cc'+1}~,
	\end{align*}
	and hence 
	\[
		\sum_{l \in I} \sum_{j \in J^+} \lambda_l v_j^2 \sigma'_{l,j} 
		\geq  \frac{cc'}{(cc'+1)(p+1)}
		= \frac{m}{d+1} \cdot \frac{c}{cc'+1}
		= \frac{mc}{(cc'+1)(d+1)}
	\]
	as required.
	Otherwise, namely, if for every $i \in I^+$ \eqref{eq:large lambda sum part 1} does not hold, then for every  $i \in I^+$ \eqref{eq:large lambda sum part 2} holds, and therefore
	\begin{align*}
		\sum_{i \in I} \sum_{j \in J^+} \lambda_i v_j^2 \sigma'_{i,j}
		\geq \sum_{i \in I^+} \sum_{j \in J^+} \lambda_i v_j^2 \sigma'_{i,j}
		\geq | I^+ | \cdot \frac{\gamma}{d+1}
		\geq \frac{mc}{(cc'+1)(d+1)}~.
	\end{align*}	
\end{proof}

\begin{lemma}
\label{lem:not too negative}
	Let $i \in I$ and $j \in J$. 
	Then, 
	\[
		\bw_j^\top \bx_i + b_j \geq -(p+1) \sum_{l \in I} | v_j | \lambda_l \sigma'_{l,j}~.
	\]
\end{lemma}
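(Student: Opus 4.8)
The plan is to argue by cases on the sign of $\bw_j^\top \bx_i + b_j$. If $\bw_j^\top \bx_i + b_j \geq 0$ there is nothing to prove: the claimed lower bound $-(p+1)\sum_{l \in I} |v_j|\,\lambda_l \sigma'_{l,j}$ is non-positive, since $\lambda_l \geq 0$ and $\sigma'_{l,j} \in [0,1]$ for all $l$. Hence the only case requiring work is $\bw_j^\top \bx_i + b_j < 0$.

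In that case the crux is the observation that $\sigma'_{i,j} = 0$: since $\sigma'_{i,j}$ is a subgradient of the ReLU at the point $\bw_j^\top \bx_i + b_j < 0$, and the ReLU has derivative $0$ everywhere on the negative half-line, we must have $\sigma'_{i,j} = 0$. I would then substitute the KKT expressions \eqref{eq:kkt condition w} and \eqref{eq:kkt condition b} into $\bw_j^\top \bx_i + b_j$ and group, for each $l \in I$, its inner-product contribution with its bias contribution:
\[
	\bw_j^\top \bx_i + b_j = \sum_{l \in I} \lambda_l y_l v_j \sigma'_{l,j}\,(\inner{\bx_l,\bx_i} + 1)~.
\]
Because $\sigma'_{i,j} = 0$, the $l = i$ term — which is the only one carrying the large factor $\inner{\bx_i,\bx_i} + 1 = d+1$ — vanishes, leaving only terms with $l \neq i$, for which $|\inner{\bx_l,\bx_i}| \leq p$ and hence $|\inner{\bx_l,\bx_i} + 1| \leq p+1$.

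The estimate then follows by the triangle inequality:
\[
	\bw_j^\top \bx_i + b_j = \sum_{l \in I,\, l \neq i} \lambda_l y_l v_j \sigma'_{l,j}\,(\inner{\bx_l,\bx_i} + 1) \geq -(p+1)\sum_{l \in I,\, l \neq i} |v_j|\,\lambda_l \sigma'_{l,j} \geq -(p+1)\sum_{l \in I} |v_j|\,\lambda_l \sigma'_{l,j}~,
\]
using $|y_l| = 1$ and $\lambda_l \sigma'_{l,j} \geq 0$ in the first inequality, and non-negativity of the discarded $l = i$ summand in the second. I do not expect any real obstacle here: the entire argument rests on the single fact that the ReLU subgradient is zero on the negative side, which is exactly what kills the otherwise problematic $d+1$ coefficient; the rest is the triangle inequality together with the near-orthogonality bound $p = \max_{i \neq j}|\inner{\bx_i,\bx_j}|$.
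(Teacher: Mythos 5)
Your proof is correct and follows essentially the same approach as the paper: the same case split on the sign of $\bw_j^\top \bx_i + b_j$, the same key observation that $\sigma'_{i,j}=0$ in the negative case (which kills the $l=i$ term carrying the $d+1$ factor), and the same substitution of the KKT expressions followed by the triangle inequality and the bound $|\inner{\bx_l,\bx_i}| \leq p$. The only cosmetic difference is that the paper's final step is stated as an equality (since the discarded $l=i$ term is exactly zero), whereas you use an inequality, which is equally valid.
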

\begin{proof}
	
	If $\bw_j^\top \bx_i + b_j \geq 0$ then the claim follows immediately. Otherwise, $\sigma'_{i,j}=0$. Hence, by \eqref{eq:kkt condition w} and~(\ref{eq:kkt condition b}) we have
	\begin{align*}
		\bw_j^\top \bx_i + b_j
		&= \sum_{l \in I} \lambda_l y_l v_j \sigma'_{l,j} \bx_l^\top \bx_i +  \sum_{l \in I} \lambda_l y_l v_j \sigma'_{l,j}
		\\ 
		&= \sum_{l \in I,~l \neq i} \lambda_l y_l v_j \sigma'_{l,j} (\bx_l^\top \bx_i + 1)
		\\
		&\geq -(p + 1) \sum_{l \in I,~l \neq i} \lambda_l | v_j | \sigma'_{l,j} 
		\\
		&= -(p + 1) \sum_{l \in I} \lambda_l | v_j | \sigma'_{l,j}~. 
	\end{align*}
\end{proof}

\begin{lemma}
\label{lem:define u}
	Let $\bu = \sum_{l \in I} y_l \bx_l$.
	For every $j \in J^+$ we have $\bw_j^\top \bu \geq \sum_{i \in I} v_j \lambda_i \sigma'_{i,j} (d - mp)$. 
	For every $j \in J^-$ we have $\bw_j^\top \bu \leq \sum_{i \in I} v_j \lambda_i \sigma'_{i,j} (d - mp)$.
\end{lemma}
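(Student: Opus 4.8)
The plan is to expand $\bw_j^\top\bu$ directly using the KKT stationarity condition \eqref{eq:kkt condition w}. First I would substitute $\bw_j = \sum_{q\in I}\lambda_q y_q v_j \sigma'_{q,j}\bx_q$ into $\bw_j^\top\bu = \bw_j^\top\sum_{l\in I}y_l\bx_l$, obtaining the double sum $\sum_{q\in I}\lambda_q v_j\sigma'_{q,j}\bigl(y_q^2\bx_q^\top\bx_q + \sum_{l\in I,\,l\neq q}y_q y_l\bx_q^\top\bx_l\bigr)$. For the bracketed factor I would use $y_q^2 = 1$ and $\norm{\bx_q} = \sqrt d$ to get the diagonal contribution $\bx_q^\top\bx_q = d$, and $|\bx_q^\top\bx_l|\leq p$ together with $|y_q y_l| = 1$ to lower bound each of the at most $m-1$ off-diagonal terms by $-p$; hence the bracketed factor is at least $d-(m-1)p\geq d-mp$.

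The second step is the sign bookkeeping. Since $\lambda_q\geq 0$ by the KKT conditions and $\sigma'_{q,j}\in[0,1]$, the scalar $\lambda_q v_j\sigma'_{q,j}$ has the sign of $v_j$. For $j\in J^+$ we have $v_j\geq 0$, so multiplying the scalar inequality on the bracketed factor by the non-negative quantity $\lambda_q v_j\sigma'_{q,j}$ preserves its direction; summing over $q\in I$ then gives $\bw_j^\top\bu\geq\sum_{i\in I}v_j\lambda_i\sigma'_{i,j}(d-mp)$. For $j\in J^-$ we have $v_j<0$, so the scalar is non-positive, the inequality reverses, and summing yields $\bw_j^\top\bu\leq\sum_{i\in I}v_j\lambda_i\sigma'_{i,j}(d-mp)$.

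There is no genuine obstacle here: this is exactly the computation already carried out in \subsecref{subsec:Jplus not increase}, just isolated as a self-contained lemma (and note that positivity of $d-mp$, which follows from $m\leq\frac{d+1}{3(p+1)}$, is not even needed for the statement itself). The only thing requiring mild care is keeping the direction of the inequality consistent when passing from the per-$q$ estimate on the bracketed factor to the weighted sum, which is why the conclusion is an upper bound on $\bw_j^\top\bu$ for $j\in J^-$ but a lower bound for $j\in J^+$.
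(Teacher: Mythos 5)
Your proposal is correct and follows the paper's proof essentially verbatim: expand $\bw_j^\top\bu$ via the KKT stationarity condition \eqref{eq:kkt condition w}, lower-bound the bracketed inner-product term by $d-(m-1)p\geq d-mp$, and then use the sign of $\lambda_i v_j\sigma'_{i,j}$ (matching the sign of $v_j$) to preserve the inequality for $j\in J^+$ and reverse it for $j\in J^-$. Your side remark that positivity of $d-mp$ is not actually needed for the lemma itself is also accurate.
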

\begin{proof}
	For $j \in J^+$, using \eqref{eq:kkt condition w} we have 
	\begin{align*}
		\bw_j^\top \sum_{l \in I} y_l \bx_l
		&= \sum_{i \in I} \lambda_i y_i v_j \sigma'_{i,j} \bx_i^\top  \sum_{l \in I} y_l \bx_l
		\\
		&= \sum_{i \in I} \lambda_i v_j \sigma'_{i,j} \left(y_i^2 \bx_i^\top \bx_i + \sum_{l \in I,~l \neq i} y_i y_l \bx_i^\top \bx_l \right)
		\\
		&\geq \sum_{i \in I} \lambda_i v_j \sigma'_{i,j} \left(d + \sum_{l \in I,~l \neq i} (-p) \right)
		\\
		&\geq \sum_{i \in I} \lambda_i v_j \sigma'_{i,j} \left(d - mp\right)~.
	\end{align*}
	
	Likewise, for $j \in J^-$ we have
	\begin{align*}
		\bw_j^\top \sum_{l \in I} y_l \bx_l
		&= \sum_{i \in I} \lambda_i v_j \sigma'_{i,j} \left(y_i^2 \bx_i^\top \bx_i + \sum_{l \in I,~l \neq i} y_i y_l \bx_i^\top \bx_l \right)
		\\
		&\leq \sum_{i \in I} \lambda_i v_j \sigma'_{i,j} \left(d + \sum_{l \in I,~l \neq i} (-p) \right)
		\\
		&\leq \sum_{i \in I} \lambda_i v_j \sigma'_{i,j} \left(d - mp\right)~.
	\end{align*}	
\end{proof}

\begin{lemma}
\label{lem:d-mp}
	We have $d-mp > 0$.
\end{lemma}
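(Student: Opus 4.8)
The plan is to unwind the normalization introduced just before the lemma. Recall that since $m \leq \frac{d+1}{3(p+1)}$, we fixed $c' \leq \frac{1}{3}$ with $m = c' \cdot \frac{d+1}{p+1}$, and that $p = \max_{i\neq j}|\inner{\bx_i,\bx_j}| \geq 0$, so in particular $p + 1 \geq 1 > 0$. The only quantity to control is the product $mp$, and everything reduces to bounding $\frac{p}{p+1}$ crudely by $1$.

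Concretely, I would write $mp = c' \cdot \frac{(d+1)p}{p+1} = c'(d+1)\cdot\frac{p}{p+1}$. Since $0 \leq \frac{p}{p+1} < 1$ and $c' \leq \frac{1}{3}$, this gives $mp < c'(d+1) \leq \frac{d+1}{3}$. Therefore
\[
	d - mp > d - \frac{d+1}{3} = \frac{3d - (d+1)}{3} = \frac{2d-1}{3}~.
\]
Since $d \geq 1$ (indeed $d$ is a positive integer, being the ambient dimension), we have $\frac{2d-1}{3} > 0$, and hence $d - mp > 0$, as claimed.

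There is essentially no obstacle here: the estimate is a one-line consequence of the assumption $m \leq \frac{d+1}{3(p+1)}$ together with the trivial fact $p < p+1$. The only thing to be mildly careful about is that the argument uses $p \geq 0$ (so that $p+1 > 0$ and the fraction $\frac{p}{p+1}$ lies in $[0,1)$), which holds by definition of $p$ as a maximum of absolute values. This lemma is then used downstream to justify that moving in the direction $\bu = \sum_{l \in I} y_l \bx_l$ strictly increases (resp. decreases) the pre-activations $\bw_j^\top\bx_i + b_j$ for $j \in J^-$ (resp. $j \in J^+$), as in Lemmas~\ref{lem:define u} and the proof sketch.
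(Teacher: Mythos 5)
Your proof is correct and follows essentially the same argument as the paper's: both substitute $m = c'(d+1)/(p+1)$, use $c'\le 1/3$ and $\tfrac{p}{p+1}<1$, and conclude positivity; the paper just additionally weakens $d+1$ to $2d$ to arrive at $d-mp \ge d/3 > 0$, where you keep $d+1$ and get $(2d-1)/3>0$.
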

\begin{proof}
By our assumption on $m$ we have 
	\[
		d-mp 
		= d - \frac{c' (d+1)p}{p+1} 
		\geq d - \frac{1}{3} \cdot \frac{2dp}{p} 
		= d - \frac{2d}{3}
		> 0~.
	\]
\end{proof}

\begin{lemma}
\label{lem:neuron becomes active}
	Let $\bz = \eta \sum_{l \in I} y_l \bx_l$ for some $\eta \geq \frac{p+1}{d-mp}$. Let $i \in I$.
	For all $j \in J^-$ we have $\bw_j^\top (\bx_i - \bz) + b_j \geq 0$,
	and for all $j \in J^+$ we have $\bw_j^\top (\bx_i + \bz) + b_j \geq 0$.
\end{lemma}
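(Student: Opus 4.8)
The plan is to prove both inequalities by combining the pointwise lower bound on $\bw_j^\top\bx_i+b_j$ supplied by \lemref{lem:not too negative} with the bound on $\bw_j^\top\bu$ from \lemref{lem:define u}, where $\bu=\sum_{l\in I}y_l\bx_l$, so that $\bz=\eta\bu$. The only real care needed is in tracking the sign of $v_j$, which differs between $J^+$ and $J^-$.

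First I would treat $j\in J^-$, where $v_j<0$. Write
\[
\bw_j^\top(\bx_i-\bz)+b_j \;=\; \bigl(\bw_j^\top\bx_i+b_j\bigr)-\eta\,\bw_j^\top\bu~.
\]
By \lemref{lem:define u}, $\bw_j^\top\bu\le \sum_{l\in I}v_j\lambda_l\sigma'_{l,j}(d-mp) = -|v_j|(d-mp)\sum_{l\in I}\lambda_l\sigma'_{l,j}$, using $v_j<0$, $d-mp>0$ (\lemref{lem:d-mp}), and $\lambda_l,\sigma'_{l,j}\ge0$. Hence $-\eta\,\bw_j^\top\bu\ge \eta|v_j|(d-mp)\sum_{l\in I}\lambda_l\sigma'_{l,j}$. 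Substituting the bound $\bw_j^\top\bx_i+b_j\ge -(p+1)|v_j|\sum_{l\in I}\lambda_l\sigma'_{l,j}$ from \lemref{lem:not too negative} and factoring out the nonnegative quantity $|v_j|\sum_{l\in I}\lambda_l\sigma'_{l,j}$ gives
\[
\bw_j^\top(\bx_i-\bz)+b_j \;\ge\; |v_j|\Bigl(\sum_{l\in I}\lambda_l\sigma'_{l,j}\Bigr)\bigl(\eta(d-mp)-(p+1)\bigr)\;\ge\;0~,
\]
where the last step uses the hypothesis $\eta\ge\frac{p+1}{d-mp}$ together with $d-mp>0$.

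The case $j\in J^+$ is symmetric: here $v_j\ge0$, so $|v_j|=v_j$, and writing $\bw_j^\top(\bx_i+\bz)+b_j=(\bw_j^\top\bx_i+b_j)+\eta\,\bw_j^\top\bu$, \lemref{lem:define u} gives $\bw_j^\top\bu\ge v_j(d-mp)\sum_{l\in I}\lambda_l\sigma'_{l,j}\ge0$, and combining with \lemref{lem:not too negative} exactly as above yields $\bw_j^\top(\bx_i+\bz)+b_j\ge v_j\bigl(\sum_{l\in I}\lambda_l\sigma'_{l,j}\bigr)\bigl(\eta(d-mp)-(p+1)\bigr)\ge0$. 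I do not anticipate any genuine difficulty; the main thing to be careful about is that multiplying the inequality of \lemref{lem:define u} by $-\eta$ in the $J^-$ case reverses it correctly, and that $d-mp>0$ (\lemref{lem:d-mp}) is exactly what makes the assumed lower bound on $\eta$ meaningful.
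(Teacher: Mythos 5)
Your proposal is correct and follows essentially the same approach as the paper's proof: combining \lemref{lem:not too negative} with \lemref{lem:define u} and invoking \lemref{lem:d-mp} to ensure $d-mp>0$. The only cosmetic difference is that you factor out $|v_j|$ explicitly rather than leaving $v_j$ inside the sum and tracking its sign, which makes the final sign check slightly more transparent but is mathematically the same computation.
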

\begin{proof}
	Let $j \in J^-$.  
	By Lemma~\ref{lem:not too negative} we have $\bw_j^\top \bx_i + b_j \geq (p+1) \sum_{l \in I} v_j \lambda_l \sigma'_{l,j}$.
	By Lemma~\ref{lem:define u} we have $-\bw_j^\top \bz \geq - \eta \sum_{l \in I} v_j \lambda_l \sigma'_{l,j} (d - mp)$.
	By combining these results we obtain
	\begin{align*}
		\bw_j^\top (\bx_i - \bz) + b_j 
		&= \bw_j^\top \bx_i + b_j - \bw_j^\top \bz
		\\
		&\geq  (p+1) \sum_{l \in I} v_j \lambda_l \sigma'_{l,j} - \eta \sum_{l \in I} v_j \lambda_l \sigma'_{l,j} (d - mp)
		\\
		&= \sum_{l \in I} v_j \lambda_l \sigma'_{l,j}  \left(p+1 - \eta (d - mp) \right)~.
	\end{align*}
	Note that by Lemma~\ref{lem:d-mp} we have $d-mp>0$.
	Hence, for $\eta \geq \frac{p+1}{d-mp}$ we have $\bw_j^\top (\bx_i - \bz) + b_j  \geq 0$.
	
	Let $j \in J^+$.  
	By Lemma~\ref{lem:not too negative} we have $\bw_j^\top \bx_i + b_j \geq -(p+1) \sum_{l \in I} v_j \lambda_l \sigma'_{l,j}$.
	By Lemma~\ref{lem:define u} we have $\bw_j^\top \bz \geq \eta \sum_{l \in I} v_j \lambda_l \sigma'_{l,j} (d - mp)$.
	By combining these results we obtain
	\begin{align*}
		\bw_j^\top (\bx_i + \bz) + b_j 
		&= \bw_j^\top \bx_i + b_j + \bw_j^\top \bz
		\\
		&\geq  -(p+1) \sum_{l \in I} v_j \lambda_l \sigma'_{l,j} + \eta \sum_{l \in I} v_j \lambda_l \sigma'_{l,j} (d - mp)
		\\
		&= \sum_{l \in I} v_j \lambda_l \sigma'_{l,j}  \left(-(p+1) + \eta (d - mp) \right)~.
	\end{align*}
	Hence, for $\eta \geq \frac{p+1}{d-mp}$ we have $\bw_j^\top (\bx_i + \bz) + b_j  \geq 0$.
\end{proof}

Let  $\eta_1 = \frac{p+1}{d-mp}$ and  $\eta_2 = \frac{2(cc'+1)(d+1)}{mc(d-mp)}$. Note that by Lemma~\ref{lem:d-mp} both $\eta_1$ and $\eta_2$ are positive.
We denote  $\bz = (\eta_1+\eta_2) \sum_{l \in I} y_l \bx_l$.

\begin{lemma}
\label{lem:pert plus}
	Let $i \in I^+$, and let
	$\bx'_i = \bx_i - \bz$.  Then, $\cn_\btheta(\bx'_i) \leq -1$.
\end{lemma}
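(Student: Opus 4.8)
The plan is to follow precisely the three-step program from the proof sketch. Write $\bu := \sum_{l \in I} y_l \bx_l$, so that $\bz = (\eta_1+\eta_2)\bu$ and $\bx'_i = \bx_i - (\eta_1+\eta_2)\bu$, and split the network as $\cn_\btheta(\bx) = A^+(\bx) + A^-(\bx)$, where $A^\pm(\bx) := \sum_{j \in J^\pm} v_j \sigma(\bw_j^\top \bx + b_j)$. Since $i \in I^+$, Lemma~\ref{lem:I is all} gives $\cn_\btheta(\bx_i) = y_i\cn_\btheta(\bx_i) = 1$, so it suffices to prove the two bounds $A^+(\bx'_i) \le A^+(\bx_i)$ and $A^-(\bx'_i) \le A^-(\bx_i) - 2$, which together give $\cn_\btheta(\bx'_i) \le \cn_\btheta(\bx_i) - 2 = -1$.

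For $J^+$: for $j \in J^+$ we have $v_j,\lambda_l,\sigma'_{l,j} \ge 0$, and by Lemma~\ref{lem:d-mp} $d - mp > 0$, so Lemma~\ref{lem:define u} yields $\bw_j^\top \bu \ge (d-mp)\sum_{l\in I} v_j\lambda_l\sigma'_{l,j} \ge 0$. Hence $\bw_j^\top\bx'_i + b_j = \bw_j^\top\bx_i + b_j - (\eta_1+\eta_2)\bw_j^\top\bu \le \bw_j^\top\bx_i + b_j$, and since $v_j \ge 0$ and $\sigma$ is nondecreasing, $A^+(\bx'_i) \le A^+(\bx_i)$.

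The heart of the argument is the $J^-$ bound, which I would obtain via the two-stage perturbation. Put $\tilde{\bx}_i := \bx_i - \eta_1\bu$. Since $\eta_1 = \frac{p+1}{d-mp}$, Lemma~\ref{lem:neuron becomes active} gives $\bw_j^\top\tilde{\bx}_i + b_j \ge 0$ for all $j \in J^-$. For such $j$, $v_j < 0$, so Lemma~\ref{lem:define u} gives $\bw_j^\top\bu \le 0$; hence moving from $\tilde{\bx}_i$ to $\bx'_i = \tilde{\bx}_i - \eta_2\bu$ only increases $\bw_j^\top(\cdot)+b_j$, so $\bw_j^\top\bx'_i + b_j \ge \bw_j^\top\tilde{\bx}_i + b_j \ge 0$ and $\sigma$ is the identity at both points. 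Using Lemma~\ref{lem:define u} once more, $\bw_j^\top\bx'_i + b_j = \bw_j^\top\tilde{\bx}_i + b_j - \eta_2\bw_j^\top\bu \ge \bw_j^\top\tilde{\bx}_i + b_j - \eta_2(d-mp)\sum_{l\in I} v_j\lambda_l\sigma'_{l,j}$; multiplying by $v_j < 0$ (which reverses the inequality), summing over $j \in J^-$, and invoking Lemma~\ref{lem:large lambda sum} together with the choice $\eta_2 = \frac{2(cc'+1)(d+1)}{mc(d-mp)}$, I get $A^-(\bx'_i) \le \sum_{j\in J^-} v_j(\bw_j^\top\tilde{\bx}_i + b_j) - 2$. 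Finally, since $\bw_j^\top\bu \le 0$ for $j \in J^-$, $\bw_j^\top\tilde{\bx}_i + b_j \ge \bw_j^\top\bx_i + b_j$, and as the left side is nonnegative it dominates $\sigma(\bw_j^\top\bx_i + b_j)$; multiplying by $v_j < 0$ and summing gives $\sum_{j\in J^-} v_j(\bw_j^\top\tilde{\bx}_i + b_j) \le A^-(\bx_i)$. Combining, $A^-(\bx'_i) \le A^-(\bx_i) - 2$, which closes the proof.

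The main obstacle is the sign bookkeeping: every inequality for a $J^-$ neuron reverses when multiplied by $v_j < 0$, and one must appreciate why the two-stage split of the perturbation is genuinely necessary. A $J^-$ neuron that is inactive at $\bx_i$ contributes $0$, so merely increasing its pre-activation need not decrease the output; the role of $\eta_1$ is to first push every $J^-$ neuron into its active region (Lemma~\ref{lem:neuron becomes active}), after which the additional displacement $\eta_2\bu$ translates directly into an output decrease quantified by Lemma~\ref{lem:large lambda sum}. Throughout, one relies on $d - mp > 0$ (Lemma~\ref{lem:d-mp}) to fix the directions of all these inequalities.
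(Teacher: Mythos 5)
Your proof is correct and follows the same structure as the paper's: the same $J^+/J^-$ split, the same two-stage perturbation $\eta = \eta_1 + \eta_2$ with Lemma~\ref{lem:neuron becomes active} pushing the $J^-$ neurons into their active region and Lemma~\ref{lem:large lambda sum} quantifying the output decrease, and the same invocations of Lemmas~\ref{lem:define u}, \ref{lem:d-mp}, and~\ref{lem:I is all}. The only cosmetic difference is that you argue directly with $\bw_j^\top\tilde{\bx}_i + b_j$ and its lower bound $\max\{0,\bw_j^\top\bx_i+b_j\}$ where the paper carries the $\max$ expression through the chain of inequalities, but this is equivalent.
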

\begin{proof}
	By Lemma~\ref{lem:define u}, for every $j \in J^+$ we have 
	\[
		\bw_j^\top \bx'_i + b_j
		= \bw_j^\top \bx_i + b_j - \bw_j^\top (\eta_1+\eta_2) \sum_{l \in I} y_l \bx_l
		\leq \bw_j^\top \bx_i + b_j - (\eta_1+\eta_2) \sum_{l \in I} v_j \lambda_l \sigma'_{l,j} (d - mp)~.
	\]
	By Lemma~\ref{lem:d-mp} the above is at most $\bw_j^\top \bx_i + b_j$.

	Consider now $j \in J^-$.
	Let $\tilde{\bx}_i  = \bx_i - \eta_1 \sum_{l \in I} y_l \bx_l$.
	By Lemma~\ref{lem:neuron becomes active} we have $\bw_j^\top \tilde{\bx}_i + b_j \geq 0$. Also, by Lemma~\ref{lem:define u} we have 
	\[
		\bw_j^\top \tilde{\bx}_i + b_j 
		= \bw_j^\top \bx_i + b_j  - \bw_j^\top \eta_1 \sum_{l \in I} y_l \bx_l
		\geq \bw_j^\top \bx_i + b_j - \eta_1 \sum_{l \in I} v_j \lambda_l \sigma'_{l,j} (d - mp)~,
	\]
	and by Lemma~\ref{lem:d-mp} the above is at least $\bw_j^\top \bx_i + b_j $.  
	Hence,
	\begin{align*}
		\bw_j^\top \bx'_i + b_j 
		= \bw_j^\top \tilde{\bx}_i + b_j - \bw_j^\top \eta_2 \sum_{l \in I} y_l \bx_l
		\geq \max\left\{0,\bw_j^\top \bx_i + b_j\right\} - \eta_2 \cdot \bw_j^\top \sum_{l \in I} y_l \bx_l ~.	
	\end{align*}
	By Lemma~\ref{lem:define u}, the above is at least 
	\[
		 \max\left\{0,\bw_j^\top \bx_i + b_j\right\} - \eta_2 \sum_{l \in I} v_j \lambda_l \sigma'_{l,j} (d - mp)~.
	\]

	Overall, we have
	\begin{align*}
		\cn_\btheta(\bx'_i)
		&= \sum_{j \in J^+} v_j \sigma(\bw_j^\top \bx'_i + b_j) + \sum_{j \in J^-} v_j \sigma(\bw_j^\top \bx'_i + b_j)
		\\
		&\leq \sum_{j \in J^+} v_j \sigma(\bw_j^\top \bx_i + b_j) + \sum_{j \in J^-} v_j \left( \max\left\{0,\bw_j^\top \bx_i + b_j\right\} - \eta_2 \sum_{l \in I} v_j \lambda_l \sigma'_{l,j} (d - mp) \right)
		\\
		&= \sum_{j \in J} v_j \sigma(\bw_j^\top \bx_i + b_j) - \sum_{j \in J^-} v_j  \eta_2 \sum_{l \in I} v_j \lambda_l \sigma'_{l,j} (d - mp) 
		\\
		&= \cn_\btheta(\bx_i) -  \eta_2  (d - mp) \sum_{j \in J^-} \sum_{l \in I} v_j^2 \lambda_l \sigma'_{l,j}~.
	\end{align*}
	By Lemmas~\ref{lem:I is all}, \ref{lem:large lambda sum} and~\ref{lem:d-mp} the above is at most
	\[
		1 -  \eta_2  (d - mp) \cdot  \frac{mc}{(cc'+1)(d+1)}~.
	\]
	For $\eta_2 = \frac{2(cc'+1)(d+1)}{mc(d-mp)}$ we conclude that $\cn_\btheta(\bx'_i)$ is at most $-1$.
\end{proof}

\begin{lemma}
\label{lem:pert minus}
	Let $i \in I^-$, and let 
	$\bx'_i = \bx_i + \bz$.  Then, $\cn_\btheta(\bx'_i) \geq 1$.
\end{lemma}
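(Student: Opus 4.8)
The plan is to mirror the proof of \lemref{lem:pert plus}, exchanging the roles of $J^+$ and $J^-$ and replacing the perturbation $-\bz$ by $+\bz$. Write $\bu = \sum_{l \in I} y_l \bx_l$, so that $\bx'_i = \bx_i + \bz = \bx_i + (\eta_1 + \eta_2)\bu$. The goal is to show that the contribution of the neurons in $J^-$ to the output does not decrease, while the contribution of the neurons in $J^+$ increases by at least $2$; combining the two gives $\cn_\btheta(\bx'_i) \geq \cn_\btheta(\bx_i) + 2 = 1$, using $\cn_\btheta(\bx_i) = -1$ for $i \in I^-$ (\lemref{lem:I is all}).

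First I would handle $j \in J^-$. By \lemref{lem:define u} we have $\bw_j^\top \bu \le \sum_{l \in I} v_j \lambda_l \sigma'_{l,j}(d - mp)$; since $v_j < 0$, each $\lambda_l \sigma'_{l,j} \ge 0$, and $d - mp > 0$ by \lemref{lem:d-mp}, the right-hand side is non-positive, so $\bw_j^\top \bx'_i + b_j \le \bw_j^\top \bx_i + b_j$. As $\sigma$ is non-decreasing and $v_j < 0$, this yields $v_j \sigma(\bw_j^\top \bx'_i + b_j) \ge v_j \sigma(\bw_j^\top \bx_i + b_j)$, i.e. the $J^-$ contribution does not decrease.

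Next I would treat $j \in J^+$ with the same two-stage argument. Set $\tilde{\bx}_i = \bx_i + \eta_1 \bu$. By \lemref{lem:neuron becomes active} (applied in its $+\bz$ form with $\eta = \eta_1 = \frac{p+1}{d-mp}$), every $j \in J^+$ satisfies $\bw_j^\top \tilde{\bx}_i + b_j \ge 0$, and by \lemref{lem:define u} (now $v_j \ge 0$, so $\bw_j^\top \bu \ge \sum_{l\in I} v_j \lambda_l \sigma'_{l,j}(d - mp) \ge 0$) the input to each such neuron does not decrease when passing from $\bx_i$ to $\tilde{\bx}_i$, hence $\bw_j^\top \tilde{\bx}_i + b_j \ge \max\{0, \bw_j^\top \bx_i + b_j\}$. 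Moving on to $\bx'_i = \tilde{\bx}_i + \eta_2 \bu$, \lemref{lem:define u} gives $\bw_j^\top \bx'_i + b_j \ge \max\{0,\bw_j^\top \bx_i + b_j\} + \eta_2 \sum_{l\in I} v_j \lambda_l \sigma'_{l,j}(d - mp)$, and since the neuron is already active at $\tilde{\bx}_i$ this extra increase passes through to the output. Summing over all $j$, using $\sigma(t)=\max\{0,t\}$ and the $J^-$ bound above,
\[
	\cn_\btheta(\bx'_i) \ge \cn_\btheta(\bx_i) + \eta_2 (d - mp)\sum_{j \in J^+}\sum_{l \in I} v_j^2 \lambda_l \sigma'_{l,j}~.
\]
Finally, \lemref{lem:I is all} gives $\cn_\btheta(\bx_i) = -1$, \lemref{lem:large lambda sum} gives $\sum_{j \in J^+}\sum_{l \in I} v_j^2 \lambda_l \sigma'_{l,j} \ge \frac{mc}{(cc'+1)(d+1)}$, and with $\eta_2 = \frac{2(cc'+1)(d+1)}{mc(d-mp)}$ the added term equals exactly $2$, so $\cn_\btheta(\bx'_i) \ge 1$.

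Since the argument is a sign-flipped copy of \lemref{lem:pert plus}, I do not expect a genuine obstacle; the only delicate point is bookkeeping of signs — namely that for $j \in J^-$ the monotonicity of $\sigma$ together with $v_j < 0$ reverses the relevant inequality in the favorable direction, and that here one must invoke the $J^+$-version of the lower bound in \lemref{lem:large lambda sum} (rather than the $J^-$-version used in \lemref{lem:pert plus}).
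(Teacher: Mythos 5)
Your proof is correct and follows essentially the same route as the paper's: the paper itself notes that \lemref{lem:pert minus} is proved by the same argument as \lemref{lem:pert plus} with $J^+$ and $J^-$ swapped and the perturbation sign flipped, which is exactly what you do. The one imprecision is cosmetic: the final added term is $\geq 2$ (not ``equals exactly $2$''), since \lemref{lem:large lambda sum} gives only a lower bound on $\sum_{j\in J^+}\sum_{l\in I} v_j^2 \lambda_l \sigma'_{l,j}$, but this does not affect the conclusion.
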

\begin{proof}
	The proof follows similar arguments to the proof of Lemma~\ref{lem:pert plus}. We give it here for completeness.
	
	By Lemma~\ref{lem:define u}, for every $j \in J^-$ we have 
	\[
		\bw_j^\top \bx'_i + b_j
		= \bw_j^\top \bx_i + b_j + \bw_j^\top (\eta_1+\eta_2) \sum_{l \in I} y_l \bx_l
		\leq \bw_j^\top \bx_i + b_j + (\eta_1+\eta_2) \sum_{l \in I} v_j \lambda_l \sigma'_{l,j} (d - mp)~.
	\]
	By Lemma~\ref{lem:d-mp} the above is at most $\bw_j^\top \bx_i + b_j$.

	Consider now $j \in J^+$.
	Let $\tilde{\bx}_i  = \bx_i + \eta_1 \sum_{l \in I} y_l \bx_l$.
	By Lemma~\ref{lem:neuron becomes active} we have $\bw_j^\top \tilde{\bx}_i + b_j \geq 0$. Also, by Lemma~\ref{lem:define u} we have 
	\[
		\bw_j^\top \tilde{\bx}_i + b_j 
		= \bw_j^\top \bx_i + b_j  + \bw_j^\top \eta_1 \sum_{l \in I} y_l \bx_l
		\geq \bw_j^\top \bx_i + b_j + \eta_1 \sum_{l \in I} v_j \lambda_l \sigma'_{l,j} (d - mp)~,
	\]
	and by Lemma~\ref{lem:d-mp} the above is at least $\bw_j^\top \bx_i + b_j $.  
	Hence,
	\begin{align*}
		\bw_j^\top \bx'_i + b_j 
		= \bw_j^\top \tilde{\bx}_i + b_j + \bw_j^\top \eta_2 \sum_{l \in I} y_l \bx_l
		\geq \max\left\{0,\bw_j^\top \bx_i + b_j\right\} + \eta_2 \cdot \bw_j^\top \sum_{l \in I} y_l \bx_l ~.	
	\end{align*}
	By Lemma~\ref{lem:define u}, the above is at least 
	\[
		 \max\left\{0,\bw_j^\top \bx_i + b_j\right\} + \eta_2 \sum_{l \in I} v_j \lambda_l \sigma'_{l,j} (d - mp)~.
	\]

	Overall, we have
	\begin{align*}
		\cn_\btheta(\bx'_i)
		&= \sum_{j \in J^+} v_j \sigma(\bw_j^\top \bx'_i + b_j) + \sum_{j \in J^-} v_j \sigma(\bw_j^\top \bx'_i + b_j)
		\\
		&\geq \sum_{j \in J^+} v_j \left(  \max\left\{0,\bw_j^\top \bx_i + b_j\right\} + \eta_2 \sum_{l \in I} v_j \lambda_l \sigma'_{l,j} (d - mp) \right ) + \sum_{j \in J^-} v_j \sigma (\bw_j^\top \bx_i + b_j) 
		\\
		&= \sum_{j \in J} v_j \sigma(\bw_j^\top \bx_i + b_j) + \sum_{j \in J^+} v_j  \eta_2 \sum_{l \in I} v_j \lambda_l \sigma'_{l,j} (d - mp) 
		\\
		&= \cn_\btheta(\bx_i) +  \eta_2  (d - mp) \sum_{j \in J^+} \sum_{l \in I} v_j^2 \lambda_l \sigma'_{l,j}~.
	\end{align*}
	By Lemmas~\ref{lem:I is all}, \ref{lem:large lambda sum} and~\ref{lem:d-mp} the above is at least
	\[
		-1 +  \eta_2  (d - mp) \cdot  \frac{mc}{(cc'+1)(d+1)}~.
	\]
	For $\eta_2 = \frac{2(cc'+1)(d+1)}{mc(d-mp)}$ we conclude that $\cn_\btheta(\bx'_i)$ is at least $1$.
\end{proof}

\begin{lemma}
\label{lem:norm z}
	 We have 
	 $\norm{\bz} = \co\left( \sqrt{\frac{d}{c^2 m}} \right)$.
\end{lemma}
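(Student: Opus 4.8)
The plan is to bound $\norm{\bz}=(\eta_1+\eta_2)\norm{\bu}$, where $\bu=\sum_{l\in I}y_l\bx_l$, by estimating the scalar factor $\eta_1+\eta_2$ and the vector norm $\norm{\bu}$ separately and then multiplying.

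First I would bound $\norm{\bu}$. Expanding the square and using $\norm{\bx_l}^2=d$ together with $|\inner{\bx_l,\bx_q}|\le p$,
\[
	\norm{\bu}^2=\sum_{l\in I}\norm{\bx_l}^2+\sum_{l\ne q}y_ly_q\inner{\bx_l,\bx_q}\le md+m(m-1)p\le md+m\cdot mp .
\]
Since by assumption $m\le\frac{d+1}{3(p+1)}$, we have $mp\le\frac{(d+1)p}{3(p+1)}<\frac{d+1}{3}$, hence $\norm{\bu}^2\le md+\frac{m(d+1)}{3}=\co(md)$, i.e.\ $\norm{\bu}=\co(\sqrt{md})$.

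Next I would bound the scalars. The same one-line computation as in the proof of Lemma~\ref{lem:d-mp} in fact gives $d-mp\ge d-\frac{2d}{3}=\frac{d}{3}$, so both denominators appearing in $\eta_1,\eta_2$ are of order $d$. Substituting the definition $p+1=\frac{c'(d+1)}{m}$ and using $c'\le\frac13$ yields $\eta_1=\frac{p+1}{d-mp}\le\frac{3(p+1)}{d}=\frac{3c'(d+1)}{md}=\co\!\left(\frac1m\right)$. Similarly, since $cc'\le\frac13$ so $cc'+1\le\frac43$ and $\frac{d+1}{d}=\co(1)$, we get $\eta_2=\frac{2(cc'+1)(d+1)}{mc(d-mp)}\le\frac{6(cc'+1)(d+1)}{mcd}=\co\!\left(\frac1{cm}\right)$. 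As $c\le 1$ the $\eta_2$ term dominates, so $\eta_1+\eta_2=\co\!\left(\frac{1}{cm}\right)$.

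Combining the two estimates gives $\norm{\bz}=(\eta_1+\eta_2)\norm{\bu}=\co\!\left(\frac{1}{cm}\right)\cdot\co(\sqrt{md})=\co\!\left(\frac{\sqrt{md}}{cm}\right)=\co\!\left(\sqrt{\frac{d}{c^2m}}\right)$, which is the claim. There is no genuine obstacle here; the only care needed is in keeping the ($d$-independent) constants straight — in particular confirming that $d-mp\ge\frac d3$ so that neither $\eta_1$ nor $\eta_2$ can blow up, and that the factors $cc'+1$ and $\frac{d+1}{d}$ are $\co(1)$ — after which everything is a direct substitution of the definitions $\eta_1=\frac{p+1}{d-mp}$, $\eta_2=\frac{2(cc'+1)(d+1)}{mc(d-mp)}$, and $m=c'\,\frac{d+1}{p+1}$.
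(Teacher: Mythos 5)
Your proof is correct and follows essentially the same route as the paper: both bound $\norm{\bu}^2 \leq md + m^2p$, use $m = c'(d+1)/(p+1)$ with $c' \leq \frac13$ to control the denominators $d - mp \geq d/3$, and then plug in the explicit formulas for $\eta_1$ and $\eta_2$. The only difference is bookkeeping — you bound $\eta_1+\eta_2$ and $\norm{\bu}$ separately and multiply, whereas the paper runs one long chain of inequalities for $\norm{\bz}^2$ directly — but the substance is identical.
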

\begin{proof}
	We have
	\begin{align*}
		\norm{\bz}^2 
		&= (\eta_1+\eta_2)^2  \norm{\sum_{l \in I} y_l \bx_l}^2
		=  (\eta_1+\eta_2)^2  \sum_{l \in I} \sum_{l' \in I} y_l y_{l'} \inner{\bx_l,\bx_{l'}}
		\\
		&\leq \left( \frac{p+1}{d-mp} + \frac{2(cc'+1)(d+1)}{mc(d-mp)} \right)^2 \left(md + m^2 p \right) 
		\\
		&\leq \left( \frac{p+1}{d-m(p+1)} + \frac{2(cc'+1)(d+1)}{mc(d-m(p+1))} \right)^2 \left(md + m^2 (p+1) \right)
		\\
		&= \left( \frac{p+1}{d-c'(d+1)} + \frac{2(cc'+1)(d+1)}{mc(d-c'(d+1))} \right)^2 \left(md + m c'(d+1) \right) 
		\\
		&\leq \left( \frac{p+1}{\frac{d+1}{2}-\frac{1}{3} \cdot (d+1)} + \frac{2(cc'+1)(d+1)}{mc\left(\frac{d+1}{2}-\frac{1}{3} \cdot (d+1)\right)} \right)^2 \left(md + m \cdot \frac{1}{3} \cdot 2d \right)  
		\\
		&\leq \left( \frac{p+1}{\frac{d+1}{6}} + \frac{2(cc'+1)(d+1)}{mc\left(\frac{d+1}{6}\right)} \right)^2 \left(2md \right) 
		= \left(\frac{6c'}{m} + \frac{12(cc'+1)}{mc} \right)^2 \left(2md \right) 
		\\
		&= \left(\frac{(18cc'+12)\sqrt{2md}}{mc} \right)^2  
		\leq \left(\frac{(6+12)\sqrt{2d}}{c\sqrt{m}} \right)^2~,  
	\end{align*}
	 where in the last inequality we used both $c' \leq \frac{1}{3}$ and $c \leq 1$.
	Hence, $\norm{\bz} = \co\left( \sqrt{\frac{d}{c^2m}} \right)$.
\end{proof}
 The theorem now follows immediately form Lemmas~\ref{lem:pert plus}, \ref{lem:pert minus} and~\ref{lem:norm z}.

\section{Proof of Corollary~\ref{cor:non-robust2}}
\label{app:proof of non-robust2}

The expressions for $\bw_j$ and $b_j$ given in \eqref{eq:kkt condition w} and~(\ref{eq:kkt condition b}) depend only on the examples $(\bx_i,y_i)$ where $y_i \cn_\btheta(\bx)=1$. Indeed, if  $y_i \cn_\btheta(\bx) \neq 1$ then $\lambda_i = 0$. Thus, all examples in the dataset that do not attain margin $1$ in $\cn_\btheta$ do not affect the expressions that describe the network $\cn_\btheta$. All arguments in the proof of Theorem~\ref{thm:non-robust} require only the examples $(\bx_i,y_i)$ that appear in  \eqref{eq:kkt condition w} and~(\ref{eq:kkt condition b}). As a consequence, all parts in the proof of Theorem~\ref{thm:non-robust} hold also here w.r.t. the set $I$. That is, the fact that the dataset includes additional points that do not appear \eqref{eq:kkt condition w} and~(\ref{eq:kkt condition b}) does not affect the proof. The only part of the proof of Theorem~\ref{thm:non-robust} that is not required here is Lemma~\ref{lem:I is all}, since we assume that all points in $I$ satisfy $y_i \cn_\btheta(\bx_i)=1$.

\end{document}